\def \bV {{\bf V}}
\def \bV {{\bf V}}
\def \bx {{\bf x}}
\def\blfootnote{\xdef\@thefnmark{}\@footnotetext}
\begin{document}

\title{Random Feature-based Online Multi-kernel Learning\\ in 
Environments with Unknown Dynamics}

\author{\name Yanning Shen \email shenx513@umn.edu 
       \AND
       \name Tianyi Chen \email chen3827@umn.edu 
        \AND
       \name Georgios B. Giannakis \email georgios@umn.edu ~\\
       \addr Department of Electrical and Computer Engineering, University of Minnesota\\
       Minneapolis, MN, 55455, USA}

\editor{Karsten Borgwardt}

\maketitle

\begin{abstract}
Kernel-based methods exhibit well-documented performance in various nonlinear learning tasks. Most of them rely on a preselected kernel, whose prudent choice presumes task-specific prior information. Especially when the latter is not available, multi-kernel learning has gained popularity thanks to its flexibility in choosing kernels from a prescribed kernel dictionary. Leveraging the random feature approximation and its recent orthogonality-promoting variant, the present contribution develops a scalable multi-kernel learning scheme (termed Raker) to obtain the sought nonlinear learning function `on the fly,' first for static environments. To further boost performance in dynamic environments, an adaptive multi-kernel learning scheme (termed AdaRaker) is developed. {AdaRaker accounts not only for data-driven learning of kernel combination, but also for the unknown dynamics. Performance is analyzed in terms of both static and dynamic regrets. AdaRaker is uniquely capable of tracking nonlinear learning functions in environments with unknown dynamics, and with with analytic performance guarantees.}  
Tests with synthetic and real datasets are carried out to showcase the 
effectiveness of the novel algorithms.\footnote{Preliminary results in this paper were presented in part at the 2018 International Conference on Artificial Intelligence and Statistics \citep{shen2018aistats}.}
\end{abstract}

\begin{keywords}
Online learning, reproducing kernel Hilbert space, multi-kernel learning, random features, dynamic and adversarial environments.
\end{keywords}

\section{Introduction}
 
Function approximation emerges in various learning tasks such as regression, classification, clustering, dimensionality reduction, as well as reinforcement learning \citep{scholkopf2002,shawe2004,dai2017}. 
{Among them, the emphasis here is placed on supervised functional learning tasks: given samples $\{(\bbx_1, y_1),\ldots, (\bbx_T, y_T)\}_{t=1}^T$ with $\bbx_t\in \mathbb{R}^d$ and $y_t\in \mathbb{R}$, the goal is to find a function $f(\cdot)$ such that the discrepancy between each pair of $y_t$ and $f(\bbx_t)$ is minimized. Typically, such discrepancy is measured by a cost function ${\cal C}(f(\bbx_t),y_t)$, which requires to find $f(\cdot)$ minimizing $\sum_{t=1}^T{\cal C}(f(\bbx_t),y_t)$. While this goal is too ambitious to achieve in general, the problem becomes tractable when $f(\cdot)$ is assumed to belong to a reproducing kernel Hilbert space (RKHS) induced by a kernel \citep{scholkopf2002}. Comparable to deep neural networks, functions defined in RKHS can model highly nonlinear relationship, and thus kernel-based methods have well-documented merits for principled function approximation. 
Despite their popularity, most kernel methods rely on a single pre-selected kernel. Yet, multi-kernel learning (MKL) is more powerful}, thanks to its data-driven kernel selection from a given dictionary; see e.g.,  \citep{shawe2004,rakotomamonjy2008,cortes2009,gonen2011}, and \citep{bazerque2013nonparametric}. 

{In addition to the attractive representation power that can be afforded by kernel methods,} several learning tasks are also expected to be performed in an online fashion. 
Such a need naturally arises when the data arrive sequentially, such as those in online spam detection \citep{ma2009}, and time series prediction \citep{richard2009}; or, when the sheer volume of data makes it impossible to carry out data analytics in batch form \citep{kivinen2004}.  
This motivates well online kernel-based learning methods that inherit the merits of their batch counterparts, while at the same time allowing efficient online implementation.
Taking a step further, the optimal function may itself change over time in environments  with \emph{unknown dynamics}. This is the case when the function of interest e.g., represents the state in brain graphs, or, captures the temporal processes propagating over time-varying networks. Especially when variations are due to adversarial interventions, the underlying dynamics are unknown. Online kernel-based learning 
in such environments remains a largely uncharted territory \citep{kivinen2004,hoi2013}. 

In accordance with these needs and desiderata, the \emph{goal} of this paper is an algorithmic pursuit of scalable online MKL in environments with unknown dynamics, along with their associated performance guarantees.   
Major challenges come from two sources: i) the well-known ``curse'' of dimensionality in kernel-based learning; and, ii) the defiance of tracking unknown time-varying functions without future information. 
Regarding i), the representer theorem renders the size of kernel matrices to grow quadratically with the number of data \citep{wahba1990spline}, thus the computational complexity to find even a \emph{single} kernel-based predictor is cubic. 
Furthermore, storage of past data causes memory overflow in large-scale learning tasks such as those emerging in e.g., topology identification of social and brain networks \citep{shen2016tmi,shen2017tsp,shen2018dsw}, which makes kernel-based methods less scalable relative to their linear counterparts.  
For ii), most online learning settings presume time invariance or slow dynamics, where an algorithm achieving sub-linear regret incurs on average ``no-regret'' relative to the \emph{best static} benchmark. 
Clearly, designing online schemes that are comparable to the \emph{best dynamic} solution is appealing though formidably challenging without knowledge of the dynamics \citep{kivinen2004}. 

\subsection{Related works}

To put our work in context, we review prior art from the following two aspects.

\vspace{0.2cm}
	\noindent{\bf Batch kernel methods}. 
	Kernel methods are known to suffer from the growing dimensionality in large-scale learning tasks \citep{shawe2004}. 
	Major efforts have been devoted to scaling up kernel methods in batch settings. Those include approaches to approximating the kernel matrix using low-rank factorizations \citep{williams2001,sheikholeslami2016}, whose performance was analyzed in \citep{cortes2010}. Recently, random feature (RF) based function estimators have gained popularity since the work of \citep{rahimi2007} and \citep{dai2014}, whose variance has been considerably reduced through an \emph{orthogonality promoting} RF modification \citep{felix2016}.  
These approaches assume that the kernel is known, a choice crucially dependent on domain knowledge. 
Enabling kernel selection, several MKL-based approaches have emerged, see e.g.,  \citep{lanckriet2004,rakotomamonjy2008,bach2008,cortes2009,gonen2011}, and their performance gain has been documented relative to their single kernel counterparts. 
However, the aforementioned methods are designed for batch settings, and are either intractable or become less efficient in online setups. When the sought functions vary over time and especially when the dynamics are unknown (as in adversarial settings), batch schemes fall short in tracking the optimal function estimators.  		

\vspace{0.2cm}	
	\noindent {\bf Online (multi-)kernel learning}. Tailored for streaming large-scale datasets, online kernel-based learning methods have gained due popularity. 
   To deal with the growing complexity of online kernel learning, successful attempts have been made to design \emph{budgeted} kernel learning algorithms, including techniques such as support vector removal \citep{kivinen2004,dekel2008}, and support vector merging \citep{wang2012}. {Maintaining an affordable budget, online multi-kernel learning (OMKL) methods have been reported for online classification \citep{jin2010,hoi2013,sahoo2016}, and regression \citep{sahoo2014online,lu2018}.} 
   Devoid of the need for budget maintenance, {online kernel-based learning algorithms based on RF approximation \citep{rahimi2007} have been developed in \citep{lu2016large,bouboulis2017,ding2017large}, but only with a single pre-selected kernel.}  
More importantly, existing kernel-based learning approaches implicitly presume a \emph{static} environment, where the benchmark is provided through the best static function (a.k.a. static regret) \citep{shalev2011}. 
However, static regret is not a comprehensive metric for dynamic settings, where the optimal kernel also varies over time and the dynamics are generally unknown as with adversarial settings.

 \subsection{Our contributions}

The present paper develops an adaptive online MKL algorithm, capable of learning a nonlinear function from sequentially arriving data samples. 
Relative to prior art, our contributions can be summarized as follows. 

\textbf{c1)} For the first time, RFs are employed for scalable online MKL 
tackled by a weighted combination of advices from an ensemble of experts - an innovative
cross-fertilization of online learning to MKL. Performance of the resultant algorithm (abbreviated as \textbf{Raker}) is benchmarked by the best time-invariant function approximant via static regret analysis.

\textbf{c2)} A novel adaptive approach (termed \textbf{AdaRaker}) is introduced for 
scalable online MKL in environments with unknown dynamics. AdaRaker is a hierarchical ensemble learner with scalable RF-based modules that provably yields sub-linear dynamic regret, so long as the accumulated variation grows sub-linearly with time.

\textbf{c3)} The novel algorithms are compared with competing alternatives for online nonlinear regression on both synthetic and real datasets. The tests corroborate that Raker and AdaRaker exhibit attractive performance in both accuracy and scalability. 

\noindent\textbf{Outline}. 
Section~\ref{sec:pre} presents preliminaries, and states the problem. Section~\ref{sec3} develops the Raker for online MKL in static environments, and Section~\ref{sec4} develops its adaptive version for online MKL in environments with unknown dynamics.  Section \ref{sec:test} reports numerical tests with both synthetic and real datasets, while conclusions are drawn in Section \ref{sec:conc}.

\vspace{0.1cm}
\noindent\textbf{Notation}. Bold uppercase (lowercase) letters will denote
matrices (column vectors), while $(\cdot)^{\top}$ stands for
vector and matrix transposition, and $\|\mathbf{x}\|$ denotes the $\ell_2$-norm of a vector $\mathbf{x}$. Inequalities for vectors $\mathbf{x} > \mathbf{0}$, and the projection operator $[\mathbf{a}]^+:=\max\{\mathbf{a},\mathbf{0}\}$ are defined entrywise. Symbol $\dag$ represents the Hermitian operator, while the indicator function $\mathds{1}_{\{A\}}$ takes value $1$ when the event $A$ happens, and $0$ otherwise. 
$\mathbb{E}$ denotes the expectation, while $\langle \cdot, \cdot \rangle$ and $\langle \cdot, \cdot \rangle_{\cal H}$ the vector inner product in Euclidian and  Hilbert space respectively.

\section{Preliminaries and Problem Statement}\label{sec:pre}
This section reviews briefly basics of kernel-based learning, to introduce notation and the needed background for our novel online MKL schemes.

Given samples $\{(\bbx_1, y_1),\ldots, (\bbx_T, y_T)\}_{t=1}^T$ with  $\bbx_t\in \mathbb{R}^d$ and  $y_t\in \mathbb{R}$, the function approximation task is to find a function $f(\cdot)$ such that $y_t=f(\bbx_t)+e_t$, where $e_t$ denotes an error term representing noise or un-modeled dynamics. 
It is supposed that $f(\cdot)$ belongs to a reproducing kernel Hilbert space (RKHS), namely $\mathcal{H}:=\{f|f(\bbx)=\sum_{t=1}^{\infty} \alpha_t\kappa(\bbx, \bbx_t)\}$, 
where $\kappa(\bbx,\bbx_t): \mathbb{R}^d\times \mathbb{R}^d\rightarrow \mathbb{R}$ is a symmetric positive semidefinite basis (so-termed kernel) function, which measures the similarity between $\bbx$ and $\bbx_t$. 
Among the choices of $\kappa$ specifying different bases, a popular one is the Gaussian given by $\kappa(\bbx, \bbx_t):= \exp[-\|\bbx-\bbx_t\|^2/(2\sigma^2)]$. A kernel is reproducing if it satisfies $\langle \kappa(\bbx, \bbx_{t}), \kappa(\bbx,\bbx_{t'})\rangle_{\cal H}=\kappa(\bbx_t,\bbx_{t'})$, which in turn induces the RKHS norm $\|f\|_{\mathcal{H}}^2:=\sum_{t}\sum_{t'} \alpha_t\alpha_{t'}\kappa(\bbx_t,\bbx_{t'})$. Consider the optimization problem 
\begin{align}\label{opt0}
	 \min_{f\in \mathcal{H}}~\frac{1}{T}\sum_{t=1}^T{\cal C}(f(\bbx_t),y_t)+\lambda\Omega\left(\|f\|_{\mathcal{H}}^2\right)
\end{align}
where depending on the application, the cost function ${\cal C}(\cdot,\cdot)$ can be selected to be, e.g., the least-squares (LS), the logistic or the hinge loss; $\Omega(\cdot)$ is an increasing function; and, $\lambda>0$ is a regularization parameter that controls overfitting. According to the representer theorem, the optimal solution of \eqref{opt0} admits the finite-dimensional form, given by \citep{wahba1990spline}
\begin{align}\label{eq:sol0}
	\hat{f}(\bbx)=\sum_{t=1}^T\alpha_t \kappa(\bbx,\bbx_t):=\bm{\alpha}^{\top}\mathbf{k}(\mathbf{x})
\end{align}
where $\bm{\alpha}:=[\alpha_1,\ldots,\alpha_T]^{\top}\!\in\mathbb{R}^T$ collects the combination coefficients, and the $T\times 1$ kernel vector is  $\mathbf{k}(\mathbf{x}):=[\kappa(\bbx,\bbx_1),\ldots,\kappa(\bbx,\bbx_T)]^{\top}\!$.
Substituting \eqref{eq:sol0} into the RKHS norm, we find $\|f\|_{\mathcal{H}}^2:=\sum_{t}\sum_{t'} \alpha_t\alpha_{t'}\kappa(\bbx_t,\bbx_{t'})=\bm{\alpha}^{\top}\mathbf{K}\bm{\alpha}$, where the $T\times T$ kernel matrix $\mathbf{K}$ has entries $[\mathbf{K}]_{t,t'}:=\kappa(\bbx_t,\bbx_{t'})$; thus, the functional problem \eqref{opt0} boils down to a $T$-dimensional optimization over $\bm{\alpha}$, namely
\begin{align}\label{eq:opt1}
	 \min_{\bm{\alpha}\in\mathbb{R}^T}~\frac{1}{T}\sum_{t=1}^T{\cal C}(\bm{\alpha}^{\top}\mathbf{k}(\mathbf{x}_t),y_t)+\lambda \Omega\left(\bm{\alpha}^{\top}\mathbf{K}\bm{\alpha}\right)
\end{align}
where $\mathbf{k}^\top(\mathbf{x}_t)$ is the $t$th row of the matrix $\mathbf{K}$. 
While a scalar $y_t$ is used here for brevity, coverage extends readily to vectors $\{\mathbf{y}_t\}$. 

Note that \eqref{opt0} relies on: i) a known pre-selected kernel $\kappa$; and ii) having $\{\bbx_t, y_t\}_{t=1}^T$ available in batch form. 
{A key observation here is that the dimension of the variable $\bbalpha$ in \eqref{eq:opt1} grows with time $T$ (or, the number of samples in the batch form), making it less scalable in online implementation.}
In the ensuing section, an online MKL method will be proposed to select $\kappa$ as a superposition of multiple kernels, when the data become available online.  

\section{Online MKL in static environments}\label{sec3}

In this section, we develop an online learning approach that builds on the notion of \textbf{ra}ndom features \citep{rahimi2007,felix2016}, and leverages in a unique way multi-\textbf{ker}nel approximation -- two tools justifying our acronym \textbf{Raker} used henceforth. 

\subsection{RF-based single kernel learning}\label{subsec.rf}

{To cope with the curse of dimensionality in optimizing \eqref{eq:opt1}, we will reformulate the functional optimization problem \eqref{opt0} as a parametric one with the dimension of optimization variables not growing with time. In this way, powerful toolboxes from convex optimization and online learning in vector spaces can be leveraged. 
We achieve this goal by judiciously using RFs. Although generalizations will follow, this subsection is devoted to RF-based single kernel learning, where basics of kernels, RFs, and online learning will be revisited.} 

As in \citep{rahimi2007}, we will approximate $\kappa$ in \eqref{eq:sol0} using shift-invariant kernels that satisfy $\kappa(\bbx_t,\bbx_{t'})=\kappa(\bbx_t-\bbx_{t'})$. For $\kappa(\bbx_t-\bbx_{t'})$ absolutely integrable,
its Fourier transform $\pi_{\kappa} (\bf v)$ exists and represents the power spectral density, which upon normalizing to ensure $\kappa(\mathbf{0})=1$, can also be 
viewed as a probability density function (pdf); hence,  
%
\begin{eqnarray}
\label{ieq.kx1}
\kappa(\bbx_t-\bbx_{t'}) =\int \pi_{\kappa}(\bbv)e^{j\bbv^\top(\bbx_t-\bbx_{t'})} d\bbv 
	:=\mathbb{E}_{\bbv}\big[e^{j\bbv^\top(\bbx_t-\bbx_{t'})}\big]	
\end{eqnarray}
where the last equality is just the definition of the expected value. Drawing a sufficient number of $D$ independent and identically distributed (i.i.d.) samples $\{\bbv_i\}_{i=1}^D$ from $\pi_{\kappa}(\bbv)$, the ensemble mean in \eqref{ieq.kx1} can be approximated by the sample average  
\begin{equation}\label{ieq.kx3}
\hat{\kappa}_c(\bbx_t,\bbx_{t'}):=\frac{1}{D}\sum_{i=1}^D e^{j\bbv_i^\top(\bbx_t-\bbx_{t'})}:=\bm{\zeta}_{\bV}^{\dag}(\bbx_t)\bm{\zeta}_{\bV}(\bbx_{t'})
\end{equation}
where $\bbV:=[\bbv_1, \dots, \bbv_D]^\top \in \mathbb{R}^{D\times d}$, symbol $\dag$ represents the Hermitian (conjugate-transpose) operator, and $\bm{\zeta}_{\bV}(\bbx)$ the complex RF vector 
\begin{equation}\label{ieq.kx4}
	\bm{\zeta}_{\bV}(\bbx) :=\frac{1}{\sqrt{D}}\left[e^{j\bbv_1^\top \bbx},\ldots,e^{j\bbv_D^\top \bbx}\right]^{\top}~.
\end{equation}
Taking expected values on both sides of \eqref{ieq.kx3} and using \eqref{ieq.kx1} yields
$\mathbb{E}_{\bbv}[\hat{\kappa}_c(\bbx_t,\bbx_{t'})]=\kappa(\bbx_t,\bbx_{t'})$, which means
${\hat \kappa}_c$ is unbiased. Likewise, ${\hat \kappa}_c$ can be shown consistent since  $\mathbb{V}{\rm ar} [\hat{\kappa}_c(\bbx_t,\bbx_{t'})] \propto D^{-1}$ vanishes as $D\rightarrow \infty$. Finding $\pi_{\kappa}(\bbv)$ requires $d$-dimensional Fourier transform of $\kappa$, generally through numerical integration. For a number of popular kernels however, $\pi_{\kappa}(\bbv)$ is available in closed form. Taking the Gaussian kernel as an example, where $\kappa_G(\bbx_t,\bbx_{t'})=\exp\big(\|\bbx_t-\bbx_{t'}\|_2^2/(2\sigma^2)\big)$, has Fourier transform corresponding to the pdf $\pi_G(\bbv)=\mathcal{N}(0,\sigma^{-2}\bbI)$.

Instead of the \emph{complex} RFs $\{\bm{\zeta}_{\bV}(\bbx_t)\}$ in \eqref{ieq.kx4} forming the linear kernel estimator $\hat{\kappa}_c$ in \eqref{ieq.kx3}, one can consider its real part ${\hat \kappa} (\bbx_t,\bbx_{t'}):= \Re\{\hat{\kappa}_c (\bbx_t,\bbx_{t'})\}$ that 
is also an unbiased estimator of $\kappa$. 
Defining the real RF vector $\bbz_{\bV}(\bbx):=
[\Re^\top \{\bm{\zeta}_{\bV}(\bbx_t)\}, \Im^\top \{\bm{\zeta}_{\bV}(\bbx_{t})\}]^\top$, this real kernel estimator becomes  (cf. \eqref{ieq.kx3})
\begin{align}\label{eq.ker-quad}
\hat{\kappa}(\bbx_t,\bbx_{t'})=\bbz_{\bV}^\top(\bbx_t)\bbz_{\bV}(\bbx_{t'})
\end{align}
where the $2D\times 1$ \emph{real} RF vector can be written as 
\begin{equation}\label{rep:z}
\bbz_{\bV}(\bbx) =\frac{1}{\sqrt{D}}\,\left[\sin(\bbv_1^\top \bbx),\dots, \sin(\bbv_D^\top \bbx), \cos(\bbv_1^\top \bbx), \ldots, \cos(\bbv_D^\top \bbx)\right]^{\top}\:.
\end{equation}

 Hence, the nonlinear function that is optimal in the sense of \eqref{opt0} can be approximated by a linear one in the new $2D$-dimensional RF space, namely (cf. \eqref{eq:sol0} and \eqref{eq.ker-quad})
\begin{align}
\label{eq:rf:fx}
	\hat{f}^{\rm RF}(\bbx)=\sum_{t=1}^T \alpha_t \bbz_{\bV}^\top(\bbx_t)\bbz_{\bV}(\bbx):=\bbtheta^\top\bbz_{\bV}(\bbx)
\end{align}
where $\bbtheta^{\top}:=\sum_{\tau=1}^T \alpha_{\tau} \bbz_{\bV}^{\top}(\bbx_{\tau})$ is the new weight vector of size $2D$ whose dimension does not increase with number of data samples $T$. 

While the solution $\hat{f}$ in \eqref{eq:sol0} is the superposition of nonlinear functions $\kappa$, its RF approximant $\hat{f}^{\rm RF}$ in \eqref{eq:rf:fx} is a linear function of $\bbz_{\bV}(\bbx)$. As a result, the loss becomes 
 \begin{align}\label{eq:loss-fun}
 	{\cal L}_t\big(f(\bbx_t)\big):={\cal C}(f(\bbx_t),y_t)+\lambda\Omega\left(\|f\|_{\mathcal{H}}^2\right)={\cal C}\big(\bbtheta^\top\bbz_{\bV}(\bbx_t),y_t\big)+\lambda \Omega\left(\|\bbtheta\|^2\right)
 \end{align}
where $\|\bbtheta\|^2:=\sum_{t}\sum_{t'}\alpha_t\alpha_{t'}\bbz_{\bbV}^\top(\bbx_t)\bbz_{\bbV}(\bbx_{t'}):=\|f\|_{\cal H}^2$; and the online learning task is 
\begin{equation}\label{eq:rf-task}
\min_{\bm{\theta}\in\mathbb{R}^{2D}}\, \sum_{t=1}^T{\cal L}\left(\bbtheta^\top\bbz_{\bV}(\bbx_t),y_t\right)\!,~{\rm with}~{\cal L}\big(\bbtheta^\top\bbz_{\bV}(\bbx_t),y_t\big):={\cal C}\big(\bbtheta^\top\bbz_{\bV}(\bbx_t),y_t\big)+\lambda \Omega\big(\|\bbtheta\|^2\big).	
\end{equation}
{Compared with the functional optimization in \eqref{opt0}, the reformulated problem \eqref{eq:rf-task} is parametric, and more importantly it involves only optimization variables of fixed size $2D$.
We can thus solve \eqref{eq:rf-task} using the online gradient descent iteration}, e.g., \citep{hazan2016}. Acquiring $\bbx_t$ per slot $t$, its RF $\bbz_{\bV}(\bbx_t)$ is formed as in \eqref{rep:z}, and $\bbtheta_{t+1}$ is updated online as 
  \begin{align}\label{eq:weit-rf}
  	\bbtheta_{t+1}=\bbtheta_t-\eta_t \nabla{\cal L}(\bbtheta_t^\top\bbz_{\bV}(\bbx_t),y_t)
  \end{align}
  where $\{\eta_t\}$ is the sequence of stepsizes that can tune learning rates, and $\nabla{\cal L}(\bbtheta_t^\top\bbz_{\bV}(\bbx_t),y_t)$ the gradient at $\bbtheta=\bbtheta_t$. 
Iteration \eqref{eq:weit-rf} provides \emph{a functional update} since $\hat{f}^{\rm RF}_t(\bbx)=\bbtheta_t^\top\bbz_{\bV}(\bbx)$, but the upshot of involving RFs is that this approximant is in the span of $\{\bbz_{\bV}(\bbx), \forall \bbx\in {\cal X}\}$. 
Since $\mathbb{E}[\hat{\kappa}]=\kappa$, we find readily that $\mathbb{E}[\hat{f}^{\rm RF}]=\hat{f}$; in words, unbiasedness of the kernel approximation ensures that the RF-based function approximant is also unbiased.
  
\paragraph{Variance-reduced RF.} {\color{black}Besides unbiasedness,  performance of the RF approximation is also influenced by the variance of RFs. Note that} the variance of ${\hat \kappa}$ in \eqref{eq.ker-quad} is of order ${\cal O} (D^{-1})$, but its scale can be reduced if $\bbV$ is formed to have orthogonal rows~\citep{felix2016}. Specifically for 
a Gaussian kernel with bandwidth $\sigma^2$, recall that $\bbV={\sigma}^{-1}\bbG$ in \eqref{rep:z}, where each entry of $\bbG$ is drawn from ${\cal N} (0,1)$. 
For the variance-reduced orthogonal (O)RF with $D=d$, one starts with Q-R factorization of $\bbV = \bbQ \bbR$, and uses the $d \times d$ factor $\bbQ$ along with a diagonal matrix 
$\bbLambda$, to form~\citep{felix2016}
 \begin{equation}
 \label{eq:orf}
 \bV_{\rm ORF}= \sigma^{-1} \: \bbLambda\bbQ
 \end{equation}
where the diagonal entries of $\bbLambda$ are drawn i.i.d. from the $\chi$ distribution with $d$ degrees of freedom, to ensure unbiasedness of the kernel approximant.
For $D>d$, one selects $D=\nu d$ with $\nu >1$ integer, and generates independently $\nu$ matrices each of size $d \times d$ as in \eqref{eq:orf}. The final ${\bf V}_{\rm ORF}$
is formed by concatenating these $d\times d$ sub-matrices.
The upshot of ORF is that \citep{felix2016}
$\mathbb{V}{\rm ar}(\hat{\kappa}_{\rm ORF}(\bbx_t,\bbx_{t'}))\leq\mathbb{V}{\rm ar}(\hat{\kappa}(\bbx_t,\bbx_{t'}))$. As we have also confirmed via simulated tests, ORF-based function approximation can attain a prescribed accuracy with considerably less ORFs than what required by its RF-based counterpart.  

{The RF-based online single kernel learning scheme in this section presumes that $\kappa$ is known a priori.} Since this is not generally possible, 
it is prudent to adaptively select kernels by superimposing multiple kernel functions from a prescribed dictionary. This superposition will play a key role in {the RF-based online MKL approach presented next.}  

\subsection{Raker for online MKL}\label{sec.rf-mkl}
Specifying the kernel that ``shapes'' ${\cal H}$ is a critical choice for single kernel learning, since different kernels yield function estimates of variable accuracy. To deal with this, combinations of kernels from a prescribed and sufficiently rich dictionary $\{\kappa_p\}_{p=1}^P$ can be employed in \eqref{opt0}. Each combination belongs to the convex hull $\bar{{\cal K}}:=\{\bar{\kappa}=\sum_{p=1}^P \bar{\alpha}_p \kappa_p,\, \bar{\alpha}_p\geq 0,\,\sum_{p=1}^P\bar{\alpha}_p=1\}$, and is itself a kernel \citep{scholkopf2002}. With $\bar{\cal H}$ denoting the RKHS induced by $\bar{\kappa}\in \bar{{\cal K}}$, one then solves \eqref{opt0} with ${\cal H}$ replaced by 
$\bar{\cal H}:={\cal H}_1\bigoplus\cdots\bigoplus{\cal H}_P$,
where $\{{\cal H}_p\}_{p=1}^P$ represent the RKHSs corresponding to $\{\kappa_p\}_{p=1}^P$~\citep{micchelli2005}. 

The candidate function ${\bar f} \in \bar{\cal H}$ is expressible in a separable form as $\bar{f}(\bbx): =\sum_{p=1}^P {\bar f}_p(\bbx)$, where ${\bar f}_p(\bbx)$ belongs to $\mathcal{H}_p$, for $p\in{\cal P}:=\{1, \ldots, P\}$. To add flexibility per kernel in our ensuing online MKL scheme, we let wlog $\{{\bar f}_p = {w}_p f_p\}_{p=1}^P$, and seek functions of the form 
\begin{align}\label{eq:fp}
f(\bbx): =\sum_{p=1}^P \bar{w}_p f_p(\bbx)\in\bar{\cal H}
\end{align}
where $f:={\bar f}/\sum_{p=1}^P w_p$, and the normalized weights $\{\bar{w}_p:=w_p/\sum_{p=1}^P w_p\}_{p=1}^P$ satisfy $\bar{w}_p\geq 0$, and $\sum_{p=1}^P\bar{w}_p=1$. Plugging \eqref{eq:fp} into \eqref{opt0},
MKL solves the nonconvex problem
\begin{subequations}\label{opt1}
	 \begin{align}
	 \min_{\{\bar{w}_p\}, \{f_p\}}~&\frac{1}{T}\sum_{t=1}^T{\cal C}\left(\sum_{p=1}^P \bar{w}_p f_p(\bbx_t),y_t\right)+\lambda\Omega\left(\left\|\sum_{p=1}^P \bar{w}_p f_p\right\|_{\mathcal{\bar H}}^2\right)\label{eq.opt1a}\\
	 {\rm s.~to}&~~\sum_{p=1}^P\bar{w}_p=1,~\bar{w}_p\geq 0,~p\in{\cal P}\label{eq.opt1b}\\
	&~~ f_p\in \mathcal{H}_p,~p\in{\cal P}. \label{eq.opt1c}
\end{align}
\end{subequations}
If $\Omega$ is convex over $f$, then \eqref{eq.opt1a} is biconvex, meaning it is convex wrt $\{f_p\}$ ($\{\bar{w}_p\}$) when  $\{\bar{w}_p\}$ ($\{f_p\}$) is given. Leveraging biconvexity, existing batch MKL schemes solve \eqref{opt1} via alternating minimization that is known not to scale well with $P$ and $T$~\citep{micchelli2005,cortes2009,gonen2011}. 

To deal with scalability,  our novel approach will leverage for the first time (O)RFs in a uniquely principled MKL formulation to end up with an efficient online learning approach. To this end, we will minimize a cost that upper bounds that in \eqref{eq.opt1a}, namely 
\begin{align}\label{opt2}
	 \min_{\{\bar{w}_p\}, \{f_p\}}~&\frac{1}{T}\sum_{t=1}^T\sum_{p=1}^P \bar{w}_p\, {\cal C}\left(f_p(\bbx_t),y_t\right)+\lambda\sum_{p=1}^P \bar{w}_p\,\Omega\left(\left\| f_p\right\|_{\mathcal{H}_p}^2\right)~~~{\rm s.~to}~~\eqref{eq.opt1b}~{\rm and}~\eqref{eq.opt1c}
\end{align}
where Jensen's inequality confirms that under \eqref{eq.opt1b} the cost in \eqref{opt2} upper bounds that of \eqref{eq.opt1a}. A key advantage of \eqref{opt2} is that its objective is separable across kernel `atoms.' 

We will exploit this separability jointly with the 
RF-based function approximation per kernel, to formulate our scalable online MKL task as
\begin{align}\label{eq:raker-task}
\!\!\min_{\{\bar{w}_p\}, \{\hat{f}_p^{\rm RF}\}} \sum_{t=1}^T\sum_{p=1}^P \bar{w}_p\, {\cal L}_t\left(\hat{f}_p^{\rm RF}(\bbx_t)\right)~~{\rm s.~to}~~\eqref{eq.opt1b}~~{\rm and}~~\hat{f}_p^{\rm RF}\!\in\!\left\{\hat{f}_p(\bbx)\!=\!\bbtheta^{\top}\bbz_{\bbV_p}(\bbx)\right\}	
\end{align}
{where we interchangeably use ${\cal L}_t(\hat{f}(\bbx_t))$ as defined in \eqref{eq:loss-fun} and ${\cal L}\big(\bbtheta^\top\bbz_{\bV}(\bbx_t),y_t\big)$ as in \eqref{eq:rf-task}.}
We will efficiently solve \eqref{eq:raker-task} `on-the-fly' using our Raker algorithm, and what more, we will provide analytical performance guarantees. Our iterative solution
will update separately each $\hat{f}_p^{\rm RF}$ as in Section \ref{subsec.rf} using the scalable (O)RF-based function approximation scheme. Given $\bbx_t$, an RF vector $\bbz_p(\bbx_t)$ will be generated per $p$ from pdf $\pi_{\kappa_p}(\bbv)$  (cf. \eqref{rep:z}), where we let $\bbz_p(\bbx_t):=\bbz_{\bbV_p}(\bbx_t)$ for notational brevity. Hence, for each $p$ and slot $t$, we have 
  \begin{align}\label{eq.klp-output}
  	\hat{f}_{p,t}^{\rm RF}(\bbx_t)=\bbtheta_{p,t}^\top \bbz_p(\bbx_t)
  \end{align}
and as in \eqref{eq:weit-rf}, $\bbtheta_{p,t}$ is updated via
  \begin{align}\label{eq.klp-weight}
  	\bbtheta_{p,t+1}=\bbtheta_{p,t}-\eta \nabla{\cal L}(\bbtheta_{p,t}^\top\bbz_p(\bbx_t),y_t).
  \end{align}

As far as solving for $\bar{w}_{p,t}$, since it resides on a probability simplex \eqref{eq.opt1b}, our idea is to employ a multiplicative update (a.k.a. exponentiated gradient descent), e.g.,~\citep{hazan2016}. 
Specifically, the un-normalized weights are found first as
\begin{align}\label{eq.mkl-weight0}
w_{p,t+1}=\arg\min_{w_p}\, \eta \: {\cal L}_t\left(\hat{f}_{p,t}^{\rm RF}(\bx_t)\right)(w_p-w_{p,t})+{\cal D}_{\rm KL}(w_p\|w_{p,t})
\end{align}
where ${\cal D}_{\rm KL}(w_p\|w_{p,t}):=w_p\log (w_p/w_{p,t})$ is the KL-divergence.
It can be readily verified that \eqref{eq.mkl-weight0} admits the following closed-form update
\begin{align}\label{eq.mkl-weight}
w_{p,t+1}=w_{p,t}\exp\left(-\eta{\cal L}_t\left(\hat{f}_{p,t}^{\rm RF}(\bx_t)\right)\right)
\end{align}
where $\eta\in(0,1)$ is a chosen constant that controls the adaptation rate of $\{w_{p,t}\}$. Having found $\{w_{p,t}\}$ as in \eqref{eq.mkl-weight}, the normalized weights in \eqref{eq:fp} are obtained as $\bar{w}_{p,t}:=w_{p,t}/\sum_{p=1}^P w_{p,t}$. 
Update \eqref{eq.mkl-weight} is intuitively pleasing because when $\hat{f}_{p,t}^{\rm RF}$ contributes a larger loss relative to other $\hat{f}_{p',t}^{\rm RF}$ with $p' \neq p$ at slot $t$, the corresponding $w_{p,t+1}$ decreases more than the other weights in the next time slot. In other words, a more accurate RF-based approximant tends to play more important role in predicting the upcoming data.

\begin{algorithm}[t] 
	\caption{Raker for online MKL in static environments}\label{algo:omkl:rf}
	\begin{algorithmic}[1]
		\State\textbf{Input:}~Kernels $\kappa_p, ~p=1,\ldots, P$, step size $\eta>0$, and number of random features $D$.
		
		
		\State\textbf{Initialization:}~$\bbtheta_1=\mathbf{0}$.
		\For {$t = 1, 2,\ldots, T$}

		\State Receive a streaming datum $\bbx_t$. 
		\State Construct $\bbz_{p}(\bbx_t)$ via \eqref{rep:z} using $\kappa_p$ for $p=1,\dots, P$. 
		\State Predict $\hat{f}_t^{\rm RF}(\bbx_t):=\sum_{p=1}^P \bar{w}_{p,t} \hat{f}_{p,t}^{\rm RF}(\bbx_t)$ with $\hat{f}_{p,t}^{\rm RF}(\bbx_t)$ in \eqref{eq.klp-output}.  
		\State Observe loss function ${\cal L}_t$, incur ${\cal L}_t(\hat{f}_t^{\rm RF}(\bbx_t))$.
		\hspace{1.cm} \For {$p=1, \ldots, P$}
		\State Obtain loss ${\cal L}(\bbtheta_{p,t}^\top\bbz_p(\bbx_t),y_t)$ or ${\cal L}_t(\hat{f}_{p,t}^{\rm RF}(\bbx_t))$.
		\State Update $\bbtheta_{p,t+1}$ via \eqref{eq.klp-weight}.
		\State Update $w_{p,t+1}$  via \eqref{eq.mkl-weight}.
		\EndFor
		\EndFor
	\end{algorithmic}
\end{algorithm}

\vspace*{0.1cm}
\noindent
{\bf Remark 1}. 
{The update \eqref{eq.mkl-weight} resembles the online learning paradigm, a.k.a. \emph{online prediction with} (weighted) \emph{expert advices}~\citep{vovk1995,cesa2006}. 
Building on but going beyond OMKL in \citep{sahoo2014online}, the idea here is to view MKL with \emph{RF-based function approximants} as a weighted combination of advices from an ensemble of $P$ function approximants (experts). Besides permeating benefits from online learning to MKL, what is distinct here relative to~\citep{vovk1995,cesa2006} is that each function approximant also performs online learning for self improvement (cf. \eqref{eq.klp-weight}).} 

In summary, our Raker for static (or slow-varying) dynamics is listed as Algorithm \ref{algo:omkl:rf}. 

\paragraph{Memory requirement and computational complexity.} 
At the $t$-th iteration, our Raker in Algorithm \ref{algo:omkl:rf} needs to store a real $2D$ RF vector, and its corresponding weight vector per $\kappa_p$. Hence, the memory required is of order $\mathcal{O}(dDP)$. Regarding computational overhead, the per-iteration complexity (e.g., calculating inner products) is again of order $\mathcal{O}(dDP)$. Compared with the complexity of $\mathcal{O}(tdP)$ for OMKL by~\citep{sahoo2014online}, or, $\mathcal{O}(t^3P)$ when matrix inversion required for the batch MKL, e.g.,  \citep{bazerque2013nonparametric}, the Raker is clearly more scalable, as $t$ grows. Even when OMKL is confined to a budget of $B$ past samples, the corresponding complexity of $\mathcal{O}(dBP)$ is comparable to that of Raker. 
This speaks for Raker's merits, whose performance guarantees will be proved analytically, and also demonstrated by numerical tests to outperform budgeted schemes.

\paragraph{Application examples: Online MKL regression and classification.}
To appreciate the usefulness of RF-based online MKL, consider first nonlinear regression, where given samples $\{\bbx_t\in\mathbb{R}^d, y_t\in\mathbb{R}\}_{t=1}^T$, the goal is to find a nonlinear function $f\in{\cal H}$, such that $y_t=f(\bbx_t)+e_t$.  The criterion is to minimize the regularized prediction error of $y_t$, typically using the LS loss ${\cal L}(f(\bbx_t),y_t):=[y_t - f(\bbx_t)]^2+\lambda\|f\|_{\cal H}^2$, whose gradient is (cf. \eqref{eq.klp-weight})
\begin{align}
 \nabla{\cal L}\left(\bbtheta_{p,t}^\top\bbz_p(\bbx_t),y_t\right)=2(\bbtheta_{p,t}^\top\bbz_p(\bbx_t)-y_t)\bbz_p(\bbx_t)+2\lambda \bbtheta_{p,t}.
\end{align}
It is clear that the per iteration complexity of Raker is only related to the dimension of $\bbz_p(\bbx_t)$, and does not increase over time. 

For nonlinear classification, consider kernel-based perceptron and kernel-based logistic regression, which aim at learning a nonlinear classifier that best approximates either $y_t$ or the pdf of $y_t$ conditioned on $\bbx_t$. 
With binary labels $\{\pm 1\}$, the perceptron solves \eqref{opt0} with ${\cal L}(f(\bbx_t),y_t)=\max(0,1-y_t f(\bbx_t))+\lambda\|f\|_{\cal H}^2$, which equals zero if $y_t=f(\bbx_t)$, otherwise it equals $1$. Raker's gradient in this case is (cf. \eqref{eq.klp-weight})
\begin{align}
	\nabla {\cal L}\left(\bbtheta_{p,t}^\top\bbz_p(\bbx_t),y_t\right)=-2y_t{\cal C}(\bbtheta_{p,t}^\top\bbz_p(\bbx_t),y_t)\bbz_p(\bbx_t)+2\lambda \bbtheta_{p,t}.
\end{align}
Accordingly, given ${\bf x}_t$, logistic regression postulates that ${\rm Pr}(y_t=1|\bbx_t )=1/(1+\exp(f(\bbx_t)))$.
Here the gradient of Raker takes the form (cf. \eqref{eq.klp-weight})
\begin{align}\label{eq.grad-log}
	\nabla {\cal L}\left(\bbtheta_{p,t}^\top\bbz_p(\bbx_t),y_t\right)=\frac{2y_t\exp(y_t\bbtheta_{p,t}^\top\bbz_p(\bbx_t))}{1+\exp(y_t\bbtheta_{p,t}^\top\bbz_p(\bbx_t))}\bbz_p(\bbx_t)+2\lambda \bbtheta_{p,t}.
\end{align}
{To compare alternatives on equal footing, the numerical tests in Section \ref{sec:test} will deal with kernel-based regression and classification.}

\subsection{Static regret analysis of Raker}\label{sec.stat-reg}
 
To analyze the performance of Raker, we assume that the following conditions are satisfied.

\vspace{0.1cm}
\noindent\textbf{(as1)}
\emph{Per slot $t$, the loss function ${\cal L}(\bbtheta^\top\bbz_{\bV}(\bbx_t),y_t)$ in \eqref{eq:rf-task} is convex w.r.t. $\bbtheta$.}

\vspace{0.1cm}
\noindent\textbf{(as2)}
\emph{For $\bm{\theta}$ belonging to a bounded set ${\bbTheta}$ with $\|\bbtheta\|\leq C_{\theta}$, the loss is bounded; that is, ${\cal L}(\bbtheta^\top\bbz_{\bV}(\bbx_t),y_t)\in[-1,1]$, and has bounded gradient, meaning, $\|\nabla {\cal L}(\bbtheta^\top\bbz_{\bV}(\bbx_t),y_t)\|\leq L$.}

\vspace{0.1cm}
\noindent\textbf{(as3)}
\emph{Kernels $\{\kappa_p\}_{p=1}^P$ are shift-invariant, standardized, and bounded, that is, $\kappa_p(\mathbf{x}_i,\mathbf{x}_j)\!\leq\! 1,\,\forall \mathbf{x}_i,\mathbf{x}_j$; and w.l.o.g. they also have bounded entries, meaning $\|\mathbf{x}\|\leq 1$.}  
\vspace{0.1cm}

Convexity of the loss under (as1) is satisfied by the popular loss functions including the square loss and the hinge loss. As far as (as2), it ensures that the losses, and their gradients are bounded, meaning they are $L$-Lipschitz continuous. 
While boundedness of the losses commonly holds since $\|\bbtheta\|$ is bounded, Lipschitz continuity is also not restrictive. Considering kernel-based regression as an example, the gradient is $(\bm{\theta}^{\top} \mathbf{z}_{\bV}(\mathbf{x}_t)-y_t) \mathbf{z}_{\bV}(\mathbf{x}_t)+\lambda\bbtheta$. Since the loss is bounded, e.g., $\|\bm{\theta}^{\top} \mathbf{z}_{\bV}(\mathbf{x}_t)-y_t\| \leq 1$, and the RF vector in \eqref{rep:z} can be bounded as $\|\mathbf{z}_{\bV}(\mathbf{x}_t)\|\leq 1$, the constant is $L:= 1+\lambda C_{\theta}$ using the Cauchy-Schwartz inequality. 
Kernels satisfying conditions in (as3) include Gaussian, Laplacian, and Cauchy \citep{rahimi2007}.   
In general, (as1)-(as3) are standard in online convex optimization (OCO) \citep{shalev2011,hazan2016}, and in kernel-based learning \citep{micchelli2005,rahimi2007,lu2016large}.

With regard to the performance of an online algorithm, static regret is commonly adopted as a metric by most OCO schemes to measure the difference
between the aggregate loss of an OCO algorithm, and that of the best
fixed function approximant in hindsight, e.g.,~\citep{shalev2011,hazan2016}. Specifically, for a generic sequence $\{\hat{f}_t\}$ generated by an RF-based kernel learning algorithm ${\cal A}$, its static regret is
\begin{align}\label{eq.sta-reg}
    {\rm Reg}_{\cal A}^{\rm s}(T):=\sum_{t=1}^T {\cal L}_t(\hat{f}_t(\bbx_t))-\sum_{t=1}^T{\cal L}_t(f^*(\bbx_t))
\end{align}
where $\hat{f}_t$ will henceforth represent $\hat{f}_t^{\rm RF}$ without the superscript for notational brevity; and, $f^*(\cdot)$ is obtained as the batch solution
\begin{equation}\label{eq.slot-opt}
 f^*(\cdot)   \in\arg\min_{\{f_p^*,\,p\in{\cal P}\}}\,\sum_{t=1}^T {\cal L}_t(f_p^*(\bbx_t))~~~{\rm with}~~~f_p^*(\cdot)\in\arg\min_{f\in{\cal F}_p} \,\sum_{t=1}^T {\cal L}_t(f(\bbx_t))
\end{equation}
with ${\cal F}_p:={\cal H}_p$, and ${\cal H}_p$ representing the RKHS induced by $\kappa_p$. 
Using \eqref{eq.sta-reg} and \eqref{eq.slot-opt}, we first establish the static regret of our Raker approach in the following lemma.
 
 \begin{lemma}
\label{lemma4}
	Under (as1), (as2), and with $\hat{f}_p^*$ as in \eqref{eq.slot-opt} with ${\cal F}_p:=\{\hat{f}_p|\hat{f}_p(\bbx)=\bbtheta^{\top}\mathbf{z}_p(\bbx),\,\forall \bbtheta\in\mathbb{R}^{2D}\}$, the sequences $\{\hat{f}_{p,t}\}$ and $\{\bar{w}_{p,t}\}$ generated by Raker satisfy the following bound
\begin{align}
	\label{eq.mkl.sreg}
\sum_{t=1}^T{\cal L}_t\bigg(\sum_{p=1}^P \bar{w}_{p,t} \hat{f}_{p,t}(\bbx_t)\bigg)-\sum_{t=1}^T{\cal L}_t\left(\hat{f}_p^*(\bbx_t)\right)
\leq \frac{\ln P}{\eta}+\frac{\|\bbtheta_p^*\|^2}{2\eta}+\frac{\eta L^2T}{2}+\eta T
	\end{align}
where $\bbtheta_p^*$ is associated with the best RF function approximant $\hat{f}_p^*(\bbx)=\left(\bbtheta_p^*\right)^{\top}\mathbf{z}_p(\bbx)$.
\end{lemma}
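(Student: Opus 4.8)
The plan is to decouple the regret into the two layers that mirror Raker's own structure: an \emph{outer} layer that fuses the $P$ approximants through the exponentiated-gradient weights $\{\bar{w}_{p,t}\}$ of \eqref{eq.mkl-weight}, and an \emph{inner} layer in which each $\hat{f}_{p,t}$ is refined by the online gradient descent step \eqref{eq.klp-weight}. Throughout I would fix an arbitrary kernel index $p$ and show the claimed inequality for that $p$; this is consistent with the statement, whose right-hand side already carries the $p$-dependence through $\|\bbtheta_p^*\|^2$. To lighten notation I write $\ell_{q,t}:={\cal L}_t(\hat{f}_{q,t}(\bbx_t))$ for the per-slot loss of the $q$-th approximant.

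The first step is a convexity (Jensen) reduction. Under (as1) the loss ${\cal L}_t$ is convex, so the convex combination can be pulled outside:
\[
{\cal L}_t\Bigl(\sum_{q=1}^P \bar{w}_{q,t}\,\hat{f}_{q,t}(\bbx_t)\Bigr)\le \sum_{q=1}^P \bar{w}_{q,t}\,\ell_{q,t}.
\]
Summing over $t$, it then suffices to bound the weighted loss $\sum_{t}\sum_{q}\bar{w}_{q,t}\ell_{q,t}$ against the benchmark. The second step handles the outer layer by a Hedge-type potential argument. Tracking $W_t:=\sum_q w_{q,t}$ and using \eqref{eq.mkl-weight}, one has $W_{t+1}/W_t=\sum_q \bar{w}_{q,t}e^{-\eta\ell_{q,t}}$; applying $e^{-a}\le 1-a+a^2$, together with $\ell_{q,t}^2\le 1$ and $\ln(1+x)\le x$, yields the per-slot inequality $\ln(W_{t+1}/W_t)\le -\eta\sum_q \bar{w}_{q,t}\ell_{q,t}+\eta^2$. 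Telescoping from $W_1=P$ (taking $w_{q,1}=1$, so $\bar{w}_{q,1}=1/P$) and lower-bounding $W_{T+1}\ge w_{p,T+1}=\exp(-\eta\sum_t \ell_{p,t})$ for the fixed $p$ gives the standard expert bound
\[
\sum_{t=1}^T\sum_{q=1}^P \bar{w}_{q,t}\ell_{q,t}-\sum_{t=1}^T \ell_{p,t}\le \frac{\ln P}{\eta}+\eta T,
\]
which is the source of the $\tfrac{\ln P}{\eta}+\eta T$ terms.

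The third step treats the inner layer. For the fixed $p$, the iterates $\{\bbtheta_{p,t}\}$ are ordinary gradient descent on the convex losses ${\cal L}(\bbtheta^\top\bbz_p(\bbx_t),y_t)$. Expanding $\|\bbtheta_{p,t+1}-\bbtheta_p^*\|^2$, bounding the gradient inner product below by the loss gap via convexity, and invoking $\|\nabla{\cal L}\|\le L$ from (as2), the usual telescoping gives
\[
\sum_{t=1}^T \ell_{p,t}-\sum_{t=1}^T {\cal L}_t(\hat{f}_p^*(\bbx_t))\le \frac{\|\bbtheta_{p,1}-\bbtheta_p^*\|^2}{2\eta}+\frac{\eta L^2 T}{2},
\]
and with the initialization $\bbtheta_{p,1}=\mathbf{0}$ the first term becomes $\|\bbtheta_p^*\|^2/(2\eta)$. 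Chaining the three displays — Jensen, then the expert bound at the fixed $p$, then the descent bound for that same $p$ — reproduces \eqref{eq.mkl.sreg} exactly.

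The one point requiring genuine care, rather than routine bookkeeping, is the outer layer: because (as2) places the losses in $[-1,1]$ rather than the more common $[0,1]$, the quadratic exponential inequality $e^{-a}\le 1-a+a^2$ must be applied on a range where it actually holds, which is why I use $\eta\in(0,1)$ together with $|\ell_{q,t}|\le 1$ to guarantee $|a|=|\eta\ell_{q,t}|<1$. I also have to keep the \emph{same} index $p$ across the expert and descent bounds so that the $\|\bbtheta_p^*\|^2$ appearing on the right is the one attached to the chosen best RF approximant $\hat{f}_p^*$. Everything else is standard online-convex-optimization manipulation.
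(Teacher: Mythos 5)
Your proposal is correct and follows essentially the same route as the paper: the paper likewise splits the argument into an online-gradient-descent bound for each fixed kernel $p$ (its Lemma \ref{lemma3}), a Hedge-type potential argument on $W_t=\sum_p w_{p,t}$ using $e^{-\eta x}\le 1-\eta x+\eta^2x^2$ and $\ell_{p,t}^2\le 1$ (its Lemma \ref{lemma10}), and a final Jensen step from convexity of ${\cal L}_t$, chained at the same fixed index $p$. Your only deviations are cosmetic (initializing $w_{q,1}=1$ so $W_1=P$ rather than $w_{q,1}=1/P$ with $W_1=1$), and your caution about the validity range of the quadratic exponential inequality for losses in $[-1,1]$ matches the paper's own restriction $|\eta|\le 1$.
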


\begin{proof}
	See Appendix \ref{app.pf.lemma4}.
\end{proof}

Besides Raker's static regret bound, the next theorem compares the Raker loss 
relative to that of the best functional estimator in the original RKHS.
\begin{customthm}{1}\label{theorem0}
Under (as1)-(as3) and with $f_p^*$ in \eqref{eq.slot-opt} belonging to the RKHS ${\cal H}_p$, for a fixed $\epsilon>0$, the following bound holds with probability at least $1-2^8\big(\frac{\sigma_p}{\epsilon}\big)^2 \exp \big(\frac{-D\epsilon^2}{4d+8}\big)$
\begin{align}\label{eq.sreg.f}
	\sum_{t=1}^T{\cal L}_t\left(\sum_{p=1}^P \bar{w}_{p,t} \hat{f}_{p,t}(\bbx_t)\right)-&\min_{p\in\{1,\ldots,P\}}\sum_{t=1}^T{\cal L}_t\left(f_p^*(\bbx_t)\right)\nonumber\\
	\leq &\frac{\ln P}{\eta}+\frac{(1+\epsilon)C^2}{2\eta}\!+\!\frac{\eta L^2T}{2}+\eta T\!+\!\epsilon LTC
\end{align}
where $C$ is a constant, while $\sigma_p^2:=\mathbb{E}_{\bV}^{\pi_{\kappa_p}}[\|\bbv\|^2]$ is the second-order moment of the RF vector norm. Setting $\eta=\epsilon={\cal O}(1/\sqrt{T})$ in \eqref{eq.sreg.f}, the static regret in \eqref{eq.sta-reg} leads to
\begin{align}
\label{eq:sreg:11}
	 {\rm Reg}_{\rm Raker}^{\rm s}(T)= {\cal O}(\sqrt{T}).
\end{align}
\end{customthm}

\begin{proof}
	See Appendix \ref{app.pf.theorem0}.
\end{proof}

Observe that the probability of \eqref{eq.sreg.f} to hold grows as $D$ increases, and one can always find a $D$ to ensure a positive probability for a given $\epsilon$. Bearing this in mind, we will henceforth use ``with high probability'' (w.h.p.) to summarize the sense \eqref{eq.sreg.f} and \eqref{eq:sreg:11} hold.  Theorem \ref{theorem0} establishes that with proper choice of parameters, the Raker achieves sub-linear regret relative to the best static function approximant in \eqref{eq.slot-opt}.

\section{Online MKL in Environments with Unknown Dynamics}\label{sec4}

Our Raker in Section \ref{sec3} combines an ensemble of kernel learners `on the fly,' and performs on average as the ``best'' fixed function, thus fulfilling the learning objective in environments with zero (or slow) dynamics. To broaden its scope to environments with unknown dynamics, this section introduces an \textbf{ada}ptive \textbf{Raker} approach (termed \textbf{AdaRaker}).

\subsection{AdaRaker with hierarchical ensembles}

As with any online learning algorithm, the choice of $\eta$ in \eqref{eq.klp-weight} and \eqref{eq.mkl-weight} affects the performance critically. 
Especially in environments with unknown dynamics, a large $\eta$ improves the tracking ability of fast-varying functions, while a smaller one allows improved estimation of slow-varying parameters $\{\boldsymbol{\theta}_t,w_{p,t}\}$.  
The optimal choice of $\eta_t$ clearly depends on the variability of the optimal function estimator \citep{kivinen2004,besbes2015}. 
Selecting $\{\eta_t\}$ however, is formidably challenging if the environment dynamics are unknown.

\begin{figure}[t]
\vspace{-.2in}
\hspace{-.1in}
\includegraphics[width=1\textwidth]{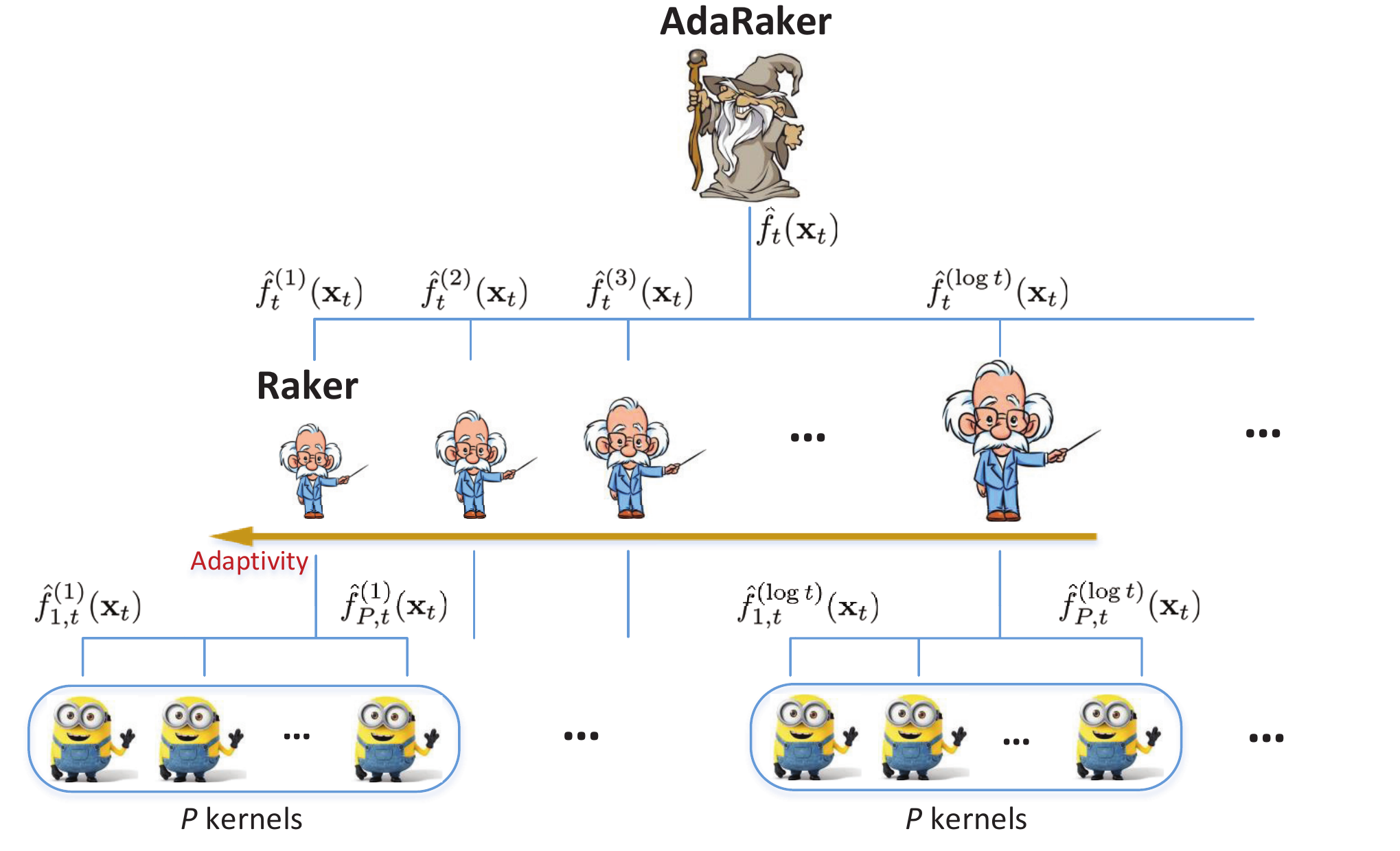}
\caption{Hierarchical AdaRaker structure. Experienced experts in the middle layer present a Raker instance, where the size of expert cartoons is proportional to the  interval length.}\label{fig:diag}
\vspace{-.1in}
\end{figure}

Toward addressing this challenge, our idea here is to hedge between multiple Raker learners with different learning rates. Specifically, we view each Raker instance in Algorithm \ref{algo:omkl:rf} as a black box algorithm $\mathcal{A}_I$, where the subscript $I$ represents the algorithm running on interval $I:=[\underline{I},\bar{I}]$ starting from slot $\underline{I}$ to slot $\bar{I}$. 
Let a pre-selected set ${\cal I}$ collect all these intervals, the design of which will be specified later. 
At the beginning of each interval $I\in {\cal I}$, a new instance of the online Raker algorithm $\mathcal{A}_I$ is initialized with an interval-specific learning rate $\eta^{(I)}:=\min\{1/2,\eta_0/\sqrt{|I|}\}$ with constant $\eta_0>0$. 
Allowing for overlap between intervals, multiple Raker instances $\{\mathcal{A}_I\}$ will be run in parallel. Consider now collecting all active intervals at the current slot $t$ in the set 
\begin{equation}\label{eq.act-set}
	{\cal I}(t):=\{I\in {\cal I}\,|\,t\in [\underline{I},\bar{I}]\},~\forall t\in{\cal T}.
\end{equation}

For each Raker instance ${\cal A}_I$ with $I\in{\cal I}(t)$, let $\hat{f}_t^{(I)}(\cdot)$ denote its output at time $t$ that combines multiple kernel-based function estimators, and ${\cal L}_t(\hat{f}_t^{(I)}(\bbx_t))$ represent the associated instantaneous loss. 
The output of the ensemble learner ${\cal A}$ at time $t$ is the weighted combination of outputs from all learners, namely $\{\hat{f}_t^{(I)},\,\forall I\in{\cal I}(t)\}$. 
With $h_t^{(I)}$ denoting the weight of the Raker instance ${\cal A}_I$, we will update it online via 
\begin{equation}\label{eq.weight-up}
	h_{t+1}^{(I)}=
    \left\{
    \begin{array}{cl}
         {0,}~  &\text{if $t\notin I$}\\
         {\eta^{(I)},}~ &\text{if $t=\underline{I}$}\\
         h_{t}^{(I)}\exp\big(\!-\eta^{(I)}r_t^{(I)}\big),~ &\text{else}
    \end{array}
   \right.
\end{equation}
where $\underline{I}$ is the first time slot of interval $I$, and the loss of ${\cal A}_I$ \emph{relative} to the overall loss is
\begin{equation}\label{eq.rel-loss}
r_t^{(I)}={\cal L}_t(\hat{f}_t(\bbx_t))-{\cal L}_t(\hat{f}_t^{(I)}(\bbx_t)),~\forall I\in{\cal I}(t).
\end{equation}
%
Intuitively thinking, one would wish to decrease (increase) the weights of those instances with small (large) losses in future rounds. 
Using update \eqref{eq.weight-up}, and defining the normalized weight as $\bar{h}_t^{(I)}:=h_t^{(I)}/\sum_{J\in{\cal I}(t)}h_t^{(J)}$, the overall output is given by
\begin{align}\label{eq.comb-up}
	\hat{f}_t(\bbx):=\sum_{I\in{\cal I}}\bar{h}_t^{(I)}\hat{f}_t^{(I)}(\bbx)~~~{\rm with}~~~\hat{f}_t^{(I)}(\bbx):=\sum_{p\in{\cal P}}\bar{w}_{p,t}^{(I)}\hat{f}_{p,t}^{(I)}(\bbx)
\end{align}
where $\{\bar{w}_{p,t}^{(I)}\}$ are the kernel combination weights generated by Raker ${\cal A}_I$ (cf. \eqref{eq.mkl-weight}).

The AdaRaker scheme is summarized in Algorithm \ref{algo:sadapt}, and depicted in Figure \ref{fig:diag}.

\begin{algorithm}[t] 
	\caption{AdaRaker for online MKL in dynamic environments}\label{algo:sadapt}
	\begin{algorithmic}[1]
		
		\State\textbf{Initialization:} learner weights $\{h_1^{(I)}\}$, and their learning rates $\{\eta^{(I)}\}$.
		\For {$t = 1, 2,\ldots, T$}

		\State Obtain $\hat{f}_t^{(I)}(\bbx_t)$ from each Raker instance $\mathcal{A}_I$, $I\in {\cal I}(t)$.
		\vspace{1mm}
		\State Predict $\hat{f}_t(\bbx_t)$ via a weighted combination \eqref{eq.comb-up}.
		\State Observe loss function ${\cal L}_t$, and incur ${\cal L}_t(\hat{f}_t(\bbx_t))$.
		\For {$ I\in {\cal I}(t)$}
		\State Incur loss ${\cal L}_t(\hat{f}_t^{(I)}(\bbx_t))$. 
		\State Update $\hat{f}_t^{(I)}$ via Raker in Algorithm \ref{algo:omkl:rf}.
		\State Update weights $h_{t+1}^{(I)}$ via \eqref{eq.weight-up}.
		\EndFor
		\EndFor
	\end{algorithmic}
\end{algorithm}

Selecting judiciously variable-length intervals in ${\cal I}$ can affect performance critically. Such a selection criterion for achieving interval regret has been reported 
in~\citep{daniely2015}. Instead, our pursuit is a hierarchical ensemble design for online MKL in environments with unknown dynamics using scalable RF-based function approximants. 
This hierarchical design is well motivated because with long intervals, the Raker loss per interval is relatively low in slow-varying settings, but higher as the dynamics become more pronounced. On the other hand, a short interval can hedge against a possibly rapid change, but its performance on each interval could suffer if the objective stays nearly static. Bearing these tradeoffs in mind, we present next a simple yet efficient interval partitioning scheme.

\vspace{0.1cm}
\noindent\textbf{Illustration of interval sets:} 
\textit{Consider partitioning the entire horizon into intervals of length $2^0, 2^1, 2^2, \ldots$. 
Intervals of length $2^j$ with a given $j\in\mathbb{N}$ are consecutively assigned without overlap starting from $t=2^j$. In the non-overlapping case, define a set of intervals $\mathcal{I}_j=[\underline{I}_{j}, \bar{I}_{j}]$ such that each interval's length is $|\mathcal{I}_j|=\bar{I}_{j}-\underline{I}_{j}+1=2^j,\, j\in\mathbb{N}$. 
For this selection of intervals, each time slot $t$ is covered by a set of  at most $\lceil\log_2t\rceil$ intervals, which forms the active set of intervals ${\cal I}(t)$ at time $t$. See the diagram in Fig. \ref{fig:meta}.}
\vspace{0.1cm}

\begin{figure}[t]
\hspace{-.1in}
\includegraphics[width=1\textwidth]{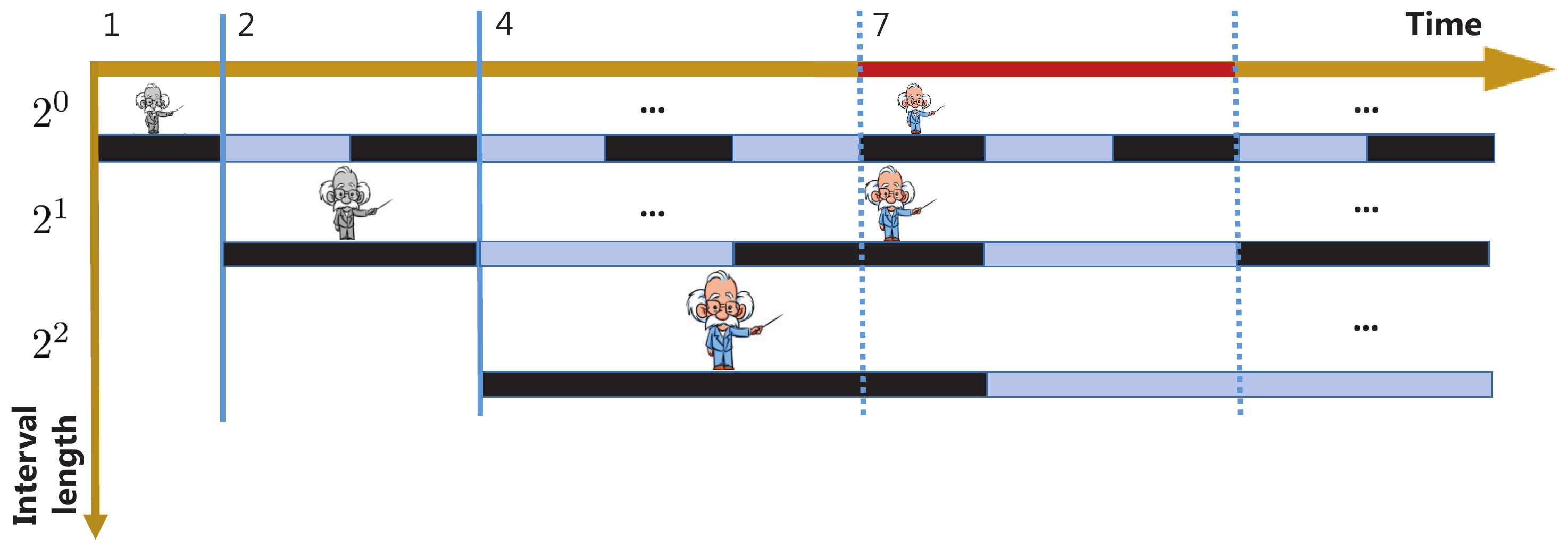}
\vspace{-.1in}
\caption{AdaRaker as an ensemble of Rakers with different learning rates: Each light/dark black interval initiates a Raker learner. At slot $7$, colored experts are active, and  gray ones are inactive.}\label{fig:meta}
\end{figure}

\subsection{Dynamic regret analysis of AdaRaker}

The static regret in Theorem 1 is with respect to a time-invariant optimal function estimator benchmark. In dynamic environments however, this optimal function benchmark may change over time - what justifies this subsection's performance analysis of AdaRaker. 

Our analysis will rely on the \emph{dynamic regret} that is related to tracking regret, and has been introduced in~\citep{besbes2015,jadbabaie2015} to quantify the performance of online algorithms. The dynamic regret is defined as (cf. \eqref{eq.sta-reg}) 
\begin{align}\label{eq.dyn-reg}
    {\rm Reg}^{\rm d}_{\cal A}(T):=\sum_{t=1}^T {\cal L}_t(\hat{f}_t(\bbx_t))-\sum_{t=1}^T{\cal L}_t(f_t^*(\bbx_t))
\end{align}
where the benchmark is the aggregate loss incurred by a sequence of the best dynamic
functions$\{f_t^*\}$ from ${\cal F}$ formed by the union of function spaces ${\cal H}_p$ induced by $\{\kappa_p\}$, given by 
\begin{equation}\label{eq.realtime-prob}
 f_t^*(\cdot)   \in\arg\min_{\{f_{p,t}^*,\,p\in{\cal P}\}}\, {\cal L}_t(f_p^*(\bbx_t))~~~{\rm with}~~~f_{p,t}^*(\cdot)\in\arg\min_{f\in{\cal H}_p} \,{\cal L}_t(f(\bbx_t))
\end{equation}
Comparing \eqref{eq.slot-opt} with \eqref{eq.realtime-prob} we deduce that the dynamic regret is always larger than the static regret in \eqref{eq.sta-reg}. Thus, a sub-linear dynamic regret implies a sub-linear static regret, but not vice versa. 
Given $\{{\cal L}_t\}$, AdaRaker generates functions $\{\hat{f}_t\}$ to minimize the dynamic regret. 

To assess the AdaRaker performance, we will start with an \emph{intermediate} result on the static regret associated with any sub-interval $I\subseteq{\cal T}$.

\begin{lemma}\label{lemma7}
Under (as1)-(as3), the static regret on any interval $I\subseteq {\cal T}$ is given by 
	\begin{align}\label{eq.inter-reg}
		{\rm Reg}^{\rm s}_{\cal A}(|I|):=\sum_{t\in I} {\cal L}_t(\hat{f}_t(\bbx_t))-\sum_{t\in I}{\cal L}_t(f_I^*(\bbx_t))
	\end{align}
where $|I|$ denotes the length of interval $I$, and the best time-invariant function approximant is $f_I^*\in\arg\min_{f\in\bigcup_{p\in{\cal P}}{\cal H}_p} \sum_{t\in I} {\cal L}_t(f(\bbx_t))$, with ${\cal H}_p$ denoting the RKHS induced by $\kappa_p$.
Then for any interval $I\subseteq {\cal T}$ and fixed positive constants $C_0$, $C_1$, the following bound holds
	\begin{align}\label{eq.lemma7}
		{\rm Reg}^{\rm s}_{\rm AdaRaker}(|I|)\leq C_0\sqrt{|I|}+C_1\ln T\sqrt{|I|},~{\rm w.h.p.}
	\end{align}
\end{lemma}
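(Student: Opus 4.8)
The plan is a hierarchical regret decomposition that mirrors AdaRaker's two layers, glued together by a covering of the target interval $I$ by the pre-selected intervals in ${\cal I}$. For a base interval $J\in{\cal I}$ I write
\begin{align*}
R_{\rm meta}(J)&:=\sum_{t\in J}{\cal L}_t(\hat f_t(\bbx_t))-\sum_{t\in J}{\cal L}_t(\hat f_t^{(J)}(\bbx_t)),\\
R_{\rm base}(J)&:=\sum_{t\in J}{\cal L}_t(\hat f_t^{(J)}(\bbx_t))-\min_{f\in\bigcup_{p}{\cal H}_p}\sum_{t\in J}{\cal L}_t(f(\bbx_t)),
\end{align*}
so that $R_{\rm meta}(J)$ measures how well the ensemble output \eqref{eq.comb-up} tracks the single learner ${\cal A}_J$ while it is active, and $R_{\rm base}(J)$ is the static regret of ${\cal A}_J$ over its own horizon.

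\textbf{Base level.} Each ${\cal A}_J$ is exactly Algorithm~\ref{algo:omkl:rf} run on the $|J|$ samples of $J$ with step size $\eta^{(J)}=\min\{1/2,\eta_0/\sqrt{|J|}\}={\cal O}(1/\sqrt{|J|})$, so Theorem~\ref{theorem0} applies with $T$ replaced by $|J|$ and gives $R_{\rm base}(J)\le C'\sqrt{|J|}$ w.h.p.\ for a constant $C'$, using $\min_p\min_{f\in{\cal H}_p}=\min_{f\in\bigcup_p{\cal H}_p}$. Because the lemma's benchmark $f_I^*$ lies in $\bigcup_p{\cal H}_p$ it is a feasible competitor on every $J\subseteq I$, whence $\min_{f\in\bigcup_p{\cal H}_p}\sum_{t\in J}{\cal L}_t(f(\bbx_t))\le\sum_{t\in J}{\cal L}_t(f_I^*(\bbx_t))$ and therefore $\sum_{t\in J}{\cal L}_t(\hat f_t^{(J)}(\bbx_t))-\sum_{t\in J}{\cal L}_t(f_I^*(\bbx_t))\le C'\sqrt{|J|}$ w.h.p.

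\textbf{Meta level.} For $R_{\rm meta}(J)$ I would run the exponential-weights argument in the \emph{sleeping-experts} form dictated by \eqref{eq.weight-up}: expert ${\cal A}_J$ carries weight only on $J$, is born with mass $\eta^{(J)}$, and is reweighted by $\exp(-\eta^{(J)}r_t^{(J)})$ thereafter. Unrolling the recursion for this single coordinate gives, up to the $\mathcal{O}(1)$ birth-slot term, $h_{\bar J+1}^{(J)}=\eta^{(J)}\exp(-\eta^{(J)}R_{\rm meta}(J))$, hence
\begin{equation*}
R_{\rm meta}(J)=\frac{1}{\eta^{(J)}}\Big(\ln\eta^{(J)}-\ln h_{\bar J+1}^{(J)}\Big).
\end{equation*}
It remains to lower-bound the terminal weight. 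I would do this by tracking the aggregate potential $W_t:=\sum_{I\in{\cal I}(t)}h_t^{(I)}$: convexity of ${\cal L}_t$ (as1) together with $\hat f_t=\sum_I\bar h_t^{(I)}\hat f_t^{(I)}$ yields $\sum_{I\in{\cal I}(t)}\bar h_t^{(I)}r_t^{(I)}\le0$, the relative losses obey $|r_t^{(I)}|\le2$ by (as2), and $|{\cal I}(t)|\le\lceil\log_2 t\rceil$; combining these with $e^{-x}\le1-x+x^2$ shows $W_t$ does not contract faster than the mass injected by newly activated experts. The induced lower bound on $h_{\bar J+1}^{(J)}$ contributes $\ln(1/\eta^{(J)})=\tfrac12\ln|J|+\ln(1/\eta_0)$ in the numerator while the quadratic remainder contributes a multiple of $\eta^{(J)}|J|$, so with $\eta^{(J)}=\eta_0/\sqrt{|J|}$,
\begin{equation*}
R_{\rm meta}(J)\le\frac{c_1\ln T}{\eta^{(J)}}+c_2\,\eta^{(J)}|J|=c_1'\sqrt{|J|}\,\ln T+c_2'\sqrt{|J|}.
\end{equation*}
The $\ln T$ factor is precisely the price of the small birth weight $\eta^{(J)}\propto1/\sqrt{|J|}$.

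\textbf{Covering and summation.} Finally I invoke the geometric-covering property of the dyadic construction in the Illustration (cf.~\citep{daniely2015}): any $I\subseteq{\cal T}$ is a disjoint union $I=\bigsqcup_{k=1}^{m}I_k$ of base intervals $I_k\in{\cal I}$ with $m\le2\lceil\log_2|I|\rceil$, whose lengths at least double away from the longest block, so $\sum_{k}\sqrt{|I_k|}\le c\sqrt{|I|}$ (a geometric series dominated by its largest term $\le\sqrt{|I|}$). Adding the base and meta bounds on each $I_k$ and summing,
\begin{align*}
{\rm Reg}^{\rm s}_{\rm AdaRaker}(|I|)
&=\sum_{k=1}^{m}\bigg(R_{\rm meta}(I_k)+\sum_{t\in I_k}{\cal L}_t(\hat f_t^{(I_k)}(\bbx_t))-\sum_{t\in I_k}{\cal L}_t(f_I^*(\bbx_t))\bigg)\\
&\le\sum_{k=1}^{m}\Big(c_1'\sqrt{|I_k|}\,\ln T+(c_2'+C')\sqrt{|I_k|}\Big)\le C_0\sqrt{|I|}+C_1\ln T\sqrt{|I|},
\end{align*}
which is \eqref{eq.lemma7} with $C_0:=c(c_2'+C')$ and $C_1:=cc_1'$; a union bound over the $m={\cal O}(\log|I|)$ invocations of Theorem~\ref{theorem0} keeps the statement valid w.h.p. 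The crux, and the step I expect to be most delicate, is the meta-level potential argument: the experts carry \emph{heterogeneous} per-interval rates $\eta^{(I)}$ and enter/leave the active set over time, so $W_t$ must be controlled across births and deaths rather than through a single-rate Hedge bound.
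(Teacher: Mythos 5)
Your overall architecture matches the paper's proof in Appendix~\ref{app.pf.lemm7}: you split the regret on each dyadic block into a meta-level term (AdaRaker versus the Raker instance ${\cal A}_J$ active on that block) and a base-level term (${\cal A}_J$ versus the best fixed function), you invoke Theorem~\ref{theorem0} with $T$ replaced by $|J|$ for the base level, and you sum over the geometric covering of $I$ from \citep{daniely2015}. The base-level step and the covering/summation step are sound (and your explicit remark that $f_I^*$ is feasible on every sub-block, plus the union bound, are points the paper glosses over). The genuine gap is in your meta-level argument. Taking \eqref{eq.weight-up}--\eqref{eq.rel-loss} literally, as you do, the terminal weight satisfies $h_{\bar J+1}^{(J)}=\eta^{(J)}\exp(-\eta^{(J)}R_{\rm meta}(J))$ up to the birth slot, so an upper bound on $R_{\rm meta}(J)$ is \emph{equivalent} to a lower bound on the individual coordinate $h_{\bar J+1}^{(J)}$; you propose to extract that lower bound from control of the aggregate potential $W_t=\sum_{I\in{\cal I}(t)}h_t^{(I)}$. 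This inference is invalid: a lower bound on the sum says nothing about any single coordinate, since the mass can concentrate on the other active experts. Asserting "the induced lower bound on $h_{\bar J+1}^{(J)}$" is therefore just restating the regret bound you are trying to prove --- the argument is circular. Worse, with this sign convention the dynamics are perverse: an instance that outperforms the ensemble has $r_t^{(J)}>0$ and is \emph{down}-weighted, so no potential argument can repair the step as stated; the Jensen inequality $\sum_{I}\bar h_t^{(I)}r_t^{(I)}\le 0$ only shows $W_t$ is non-contracting, which is fully consistent with $h_t^{(J)}\to 0$ exponentially fast and $R_{\rm meta}(J)$ growing linearly.

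The inequality has to be run in the opposite direction, which is what both the paper and the cited literature do. Either (i) follow the paper: treat the meta level as Hedge with weights decaying in the experts' \emph{absolute} losses and rerun the steps \eqref{eq:sreg:W1}--\eqref{eq:sreg:8} of Lemma~\ref{lemma10}, with the number of experts crudely bounded by $t\log t$; this gives ${\cal R}_1\le \eta^{(I)}|I|+\ln(t\log t)/\eta^{(I)}$ directly, and no per-coordinate lower bound is ever needed. Or (ii) use the actual sleeping-experts scheme of \citep{daniely2015,luo2015}, in which an instance's weight \emph{grows} with its relative outperformance (e.g., $w_{t+1}^{(I)}=w_t^{(I)}\big(1+\eta^{(I)}r_t^{(I)}\big)$ with prediction weights proportional to $\eta^{(I)}w_t^{(I)}$); then convexity bounds the aggregate potential from \emph{above} by the injected mass, and the trivial per-coordinate bound $h_{\bar J+1}^{(J)}\le W_{\bar J+1}\le{\rm poly}(T)$ --- an upper, not lower, bound --- yields $R_{\rm meta}(J)\le{\cal O}(\sqrt{|J|}\,\ln T)$. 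With the meta step repaired in either of these ways, the remainder of your proof (the geometric-series estimate $\sum_k\sqrt{|I_k|}\le c\sqrt{|I|}$ and the final assembly) goes through and recovers \eqref{eq.lemma7}.
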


\begin{proof}
	 See Appendix \ref{app.pf.lemm7}.
\end{proof}

Lemma \ref{lemma7} establishes that by combining Raker learners with different learning rates, AdaRaker can achieve sub-linear static regret over \emph{any} interval $I$ with arbitrary interval length. This also holds for intervals overlapping with multiple intervals; see e.g., the red interval in Fig. \ref{fig:meta}. 
Clearly, the best fixed solution in \eqref{eq.inter-reg} is interval specific, which can vary over different intervals. This is qualitatively why the function approximants generated by AdaRaker can cope with a \emph{time-varying} benchmark. Such an intuition will in fact become quantitative in the next theorem, which establishes the dynamic regret for AdaRaker.

\begin{customthm}{2}\label{thm.dyn-reg}
		Suppose (as1)-(as3) are satisfied, and define the accumulated variation of online loss functions as
\begin{align}\label{eq.var-loss}
\!{\cal V}(\{{\cal L}_t\}_{t=1}^T):=\sum_{t=1}^T\max_{f\in {\cal F}}\,\big|{\cal L}_{t+1}(f(\bbx_{t+1}))\!-\!{\cal L}_t(f(\bbx_t))\big|
\end{align}
where ${\cal F}:=\bigcup_{p\in{\cal P}}{\cal H}_p$. 
		Then AdaRaker can afford a dynamic regret in \eqref{eq.dyn-reg} bounded by
	\begin{align}\label{eq.dyn-regbound}
		{\rm Reg}^{\rm d}_{\rm AdaRaker}(T)\leq  &(2+C_0+C_1\ln T)T^{\frac{2}{3}}{\cal V}^{\frac{1}{3}}(\{{\cal L}_t\}_{t=1}^T)\nonumber\\
		=&\tilde{{\cal O}}\left(T^{\frac{2}{3}}{\cal V}^{\frac{1}{3}}(\{{\cal L}_t\}_{t=1}^T)\right),~{\rm w.h.p.}
	\end{align}
	where $\tilde{\cal O}$ neglects the lower-order terms with a polynomial $\log T$ rate.
\end{customthm}
\begin{proof}
	See Appendix \ref{app.pf.thm.dyn-reg}.
\end{proof}

Theorem \ref{thm.dyn-reg} asserts that AdaRaker's dynamic regret depends on the variation of loss functions in \eqref{eq.var-loss}, and on the horizon $T$. 
Interesting enough, whenever the loss functions \emph{do not vary on average}, meaning ${\cal V}(\{{\cal L}_t\}_{t=1}^T)=\mathbf{o}(T)$, AdaRaker achieves sub-linear dynamic regret. To this end, it is useful to present an example where this argument holds.

\paragraph{Intermittent switches:}
With ${\cal L}_t\neq{\cal L}_{t+1}$ defining a switch, consider that the number of switches is sub-linear over $T$; that is, $\sum_{t=1}^T \mathds{1}({\cal L}_t\neq{\cal L}_{t+1})=T^{\gamma}$, $\forall \gamma\in[0,1)$.
Then it follows that ${\cal V}(\{{\cal L}_t\}_{t=1}^T)={\cal O}(T^{\gamma})$, since the one-slot variation of the loss functions is bounded.

Other setups with sub-linear accumulated variation emerge, e.g., when the per-slot variation decreases as ${\cal V}({\cal L}_t)={\cal O}(t^{\gamma-1})$, $\forall \gamma\in[0,1)$. Besides dynamic losses, sub-linear dynamic regrets can be also effected by confining the variability of optimal function estimators. 

\begin{customthm}{3}
\label{thm.reg-sa-d}
Suppose the conditions of Theorem \ref{thm.dyn-reg} hold, and define the regret relative to an $m$-switching dynamic benchmark as ${\rm Reg}^{m}_{\cal A}(T)\!:=\!\sum_{t=1}^T\! {\cal L}_t(\hat{f}_t(\bbx_t))-\sum_{t=1}^T{\cal L}_t(\check{f}_t^*(\bbx_t))$, 
	where $\{\check{f}_t^*\}$ is any trajectory from 
	\begin{equation}\label{eq.m-switch}
	\Big\{\textstyle\big\{\check{f}_t^*\big\}_{t=1}^T\!\in \bigcup_{p\in{\cal P}}{\cal H}_p\Big|\sum_{t=1}^T \mathds{1}(\check{f}_t^*\neq\check{f}_{t-1}^*)\leq m\Big\}.	
	\end{equation}
With $C_0$ and $C_1$ denoting some universal constants, it then holds w.h.p. that
\begin{align}\label{eq.reg.d}
	{\rm Reg}^{m}_{\rm AdaRaker}(T)\!\leq\!(C_0\!+\!C_1\ln T)\sqrt{T m}\!=\!\tilde{{\cal O}}\!\left(\!\sqrt{T m}\right).
	\end{align}
\end{customthm}
\begin{proof}
	See Appendix \ref{app.pf.thm.reg-sa-d}.
\end{proof}

Theorem \ref{thm.reg-sa-d} asserts that without prior knowledge of the environment dynamics, the dynamic regret of AdaRaker is sub-linearly growing with time, provided that the number of changes of the optimal function estimators is sub-linear in $T$; that is, ${\rm Reg}^{m}_{\rm AdaRaker}(T)=\mathbf{o}(T)$ given $m=\mathbf{o}(T)$.  
Therefore, our AdaRaker can track the optimal dynamic functions, if the optimal function \emph{varies slowly} over time; e.g., it does not change in the long-term average sense. 
While the conditions to guarantee optimality in dynamic settings may appear restrictive, they are practically relevant, since abrupt changes or adversarial samples will likely not happen at each and every slot in practice. 
\section{Numerical Tests}
\label{sec:test}

This section evaluates the performance of our novel algorithms in online regression tasks using both synthetic and real-world datasets. 

In the subsequent tests, we use the following benchmarks. 

\vspace{0.1cm}
\noindent\textbf{RBF}: the online \emph{single} kernel learning method using Gaussian kernels, a.k.a. radial basis functions (RBFs), with bandwidth $\sigma^2=\{0.1, 1, 10\}$ (cf. RBF01, RBF1, RBF10);

\noindent\textbf{POLY}: the online \emph{single} kernel method using polynomial kernels, with degree $d=\{2, 3\}$ (cf. POLY2, POLY3); 

\noindent\textbf{LINEAR}: the online \emph{single} kernel learning method using a linear kernel; 

\noindent\textbf{AvgMKL}: the online \emph{single} kernel learning method using the average of candidate kernels without updating the weights;

\noindent\textbf{OMKL}: the popular online (O)MKL algorithm without a budget \citep{sahoo2014online};

\noindent\textbf{OMKL-B}: the OMKL algorithm on a budget for regression modified from its single kernel version \citep{kivinen2004}, with the kernel combination weights updated as \eqref{eq.mkl-weight}; 

\noindent\textbf{M-Forgetron}: the online multi-kernel based Forgetron modified from its single kernel version \citep{dekel2008}, with the kernel combination weights updated as in \eqref{eq.mkl-weight};

\noindent\textbf{AdaMKL}: the adaptive version of OMKL that operates in a similar fashion as Algorithm \ref{algo:sadapt}, but instead of using our Raker as an ensemble, it adopts OMKL as an instance $\mathcal{A}_I$. 

\vspace{0.1cm}
{Note that AdaMKL, OMKL-B, and M-Forgetron have not been formally proposed in existing works, but we introduced them here only for comparison purposes. 
All the considered MKL approaches use a dictionary of Gaussian kernels with $\sigma^2=\{0.1, 1, 10\}$, and {AvgMKL}, OMKL, AdaMKL, OMKL-B, and M-Forgetron also include a linear, and a polynomial kernel with order of $2$ into their kernel dictionary. For all MKL approaches, the stepsize for updating kernel combination weights  in \eqref{eq.mkl-weight} is chosen as $0.5$ uniformly, while the stepsize for updating per-kernel function estimators will be specified later in each test.
The regularization parameter is set equal to $\lambda=0.01$ for all approaches.} 
Entries of $\{\bbx_t\}$ and $\{y_t\}$ are normalized to lie in $[0,1]$. 
Regarding AdaMKL and AdaRaker, multiple instances are initialized on intervals with length $|I|:=2^0, 2^1, 2^2, \ldots$, along with the corresponding learning rate on the interval $I$ as $\eta^{(I)}:=\min\{1/2,10/\sqrt{|I|}\}$; see the example in Figure \ref{fig:meta}. 
 {All the results in the tables were reported using the performance at the last time index.}

\begin{table}
\small
{\color{black} 
\begin{center}
\vspace{-0.1cm}
 \begin{tabular}[h]{ c | c | c |c|c|c }
\hline 
    Time index & $[1, 200]$ &$[201, 1000]$  &  $[1001, 2000]$& $[2001, 2300]$ & $[2301, 3000]$\\ \hline
    $\sigma^2$& $0.01$  & $1$ & $10$ &$0.01$  & $1$   \\ \hline\hline
   Time index &$[3001, 3500]$& $[3501, 4300]$ & $[4301, 5100]$ &$[5101, 5900]$& $5901,6500$\\ \hline
    $\sigma^2$& $10$  & $0.01$ & $1$ &$0.01$  & $0.1$  \\ \hline
    \end{tabular}
\end{center}
\vspace{-0.3cm}
\caption{Intervals and $\{\sigma^2\}$ for synthetic dataset.}
\label{tab:syn:sig}}
\vspace{-0.2cm}
\end{table}

\subsection{ Synthetic data tests for regression} This subsection presents the synthetic data tests for regression.

\noindent{\bf Data generation.} In this test, two synthetic datasets were generated as follows.\\
 For \emph{Dataset 1}, the feature vectors $\{\bbx_t\in\mathbb{R}^{10}\}_{t=1}^{14,000}$ are generated from the standardized Gaussian distribution, while $y_t$ is generated as $y_t=\sum_{\tau=1}^t\alpha_{\tau}\kappa_{\tau}(\bbx_t, \bbx_{\tau})$, where {$\{\alpha_t\}$ is generated as $\alpha_t=1+e_t$ with $e_t\sim \mathcal{N}(0,\sigma_{\alpha}^2)$ and $\sigma_{\alpha}=0.01$, while} $\{\kappa_t\}$ are kernel functions that change overtime: for $t\in[1, 8000]\bigcup[18001,26000]$,  $\kappa_t$ is a Gaussian kernel with $\sigma^2=1$, while for $t\in[8001, 18000]\bigcup[26001, 36000]$ the Gaussian kernel has $\sigma^2=10$. 
Therefore, the underlying nonlinear relationship between $\bbx_t$ and $y_t$ undergoes intermittent changes, which come from corresponding changes in the optimal kernel combinations.\\
\emph{Dataset 2} is generated with more variance and switching points.  Specifically, the feature vectors are generated from the standardized Gaussian distribution, while $y_t$ is generated as $y_t=\sum_{\tau=1}^t\alpha_{\tau}\kappa_{\tau}(\bbx_t, \bbx_{\tau})$, where $\{\kappa_t\}$ change over $10$ intervals with different $\sigma^2$; see Table \ref{tab:syn:sig}.

 \begin{figure}[t]
\begin{tabular}{cc}
\hspace*{-6ex}
\includegraphics[width=8.5cm]{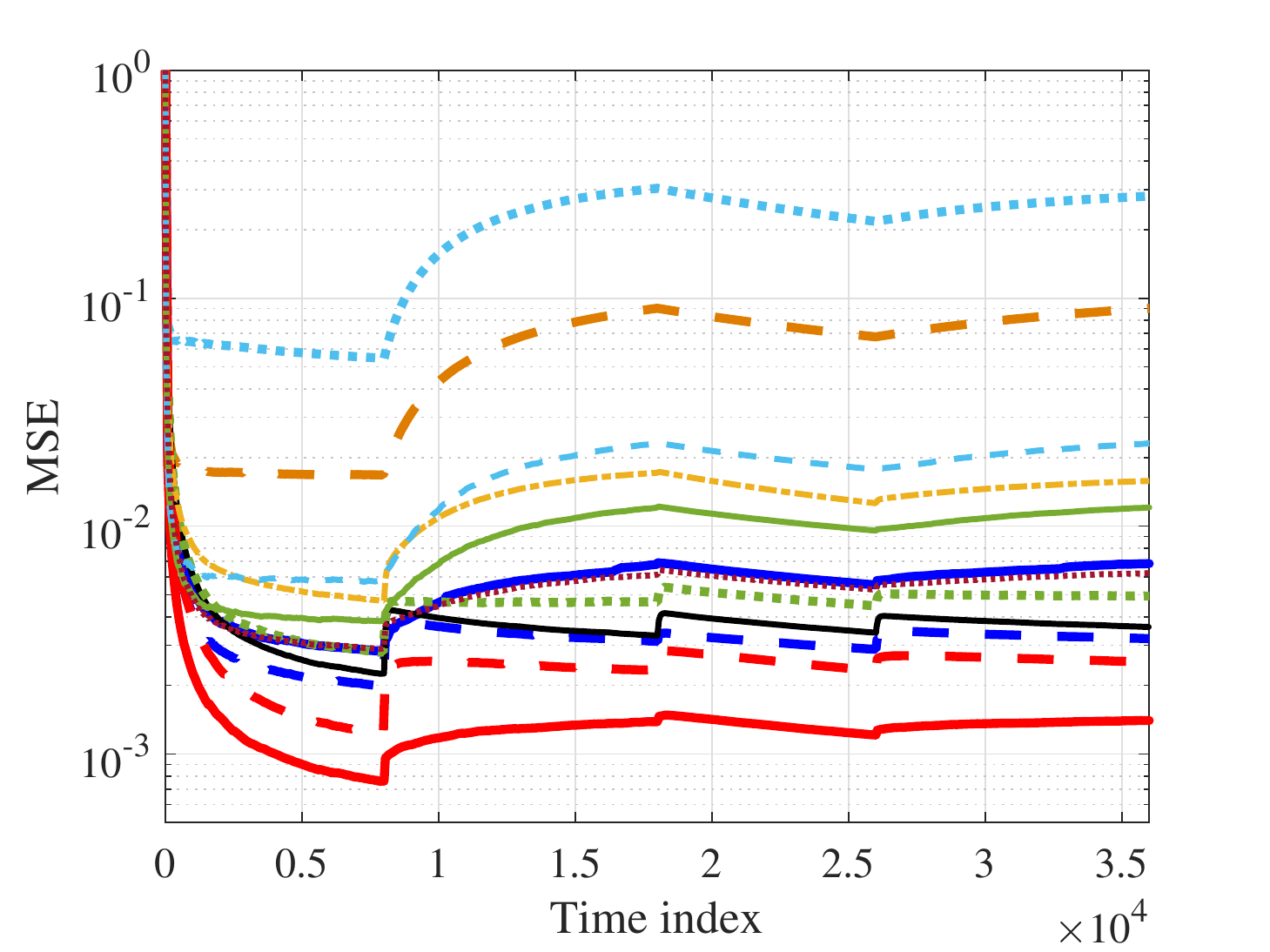}&
\hspace*{-6ex}
\includegraphics[width=8.5cm]{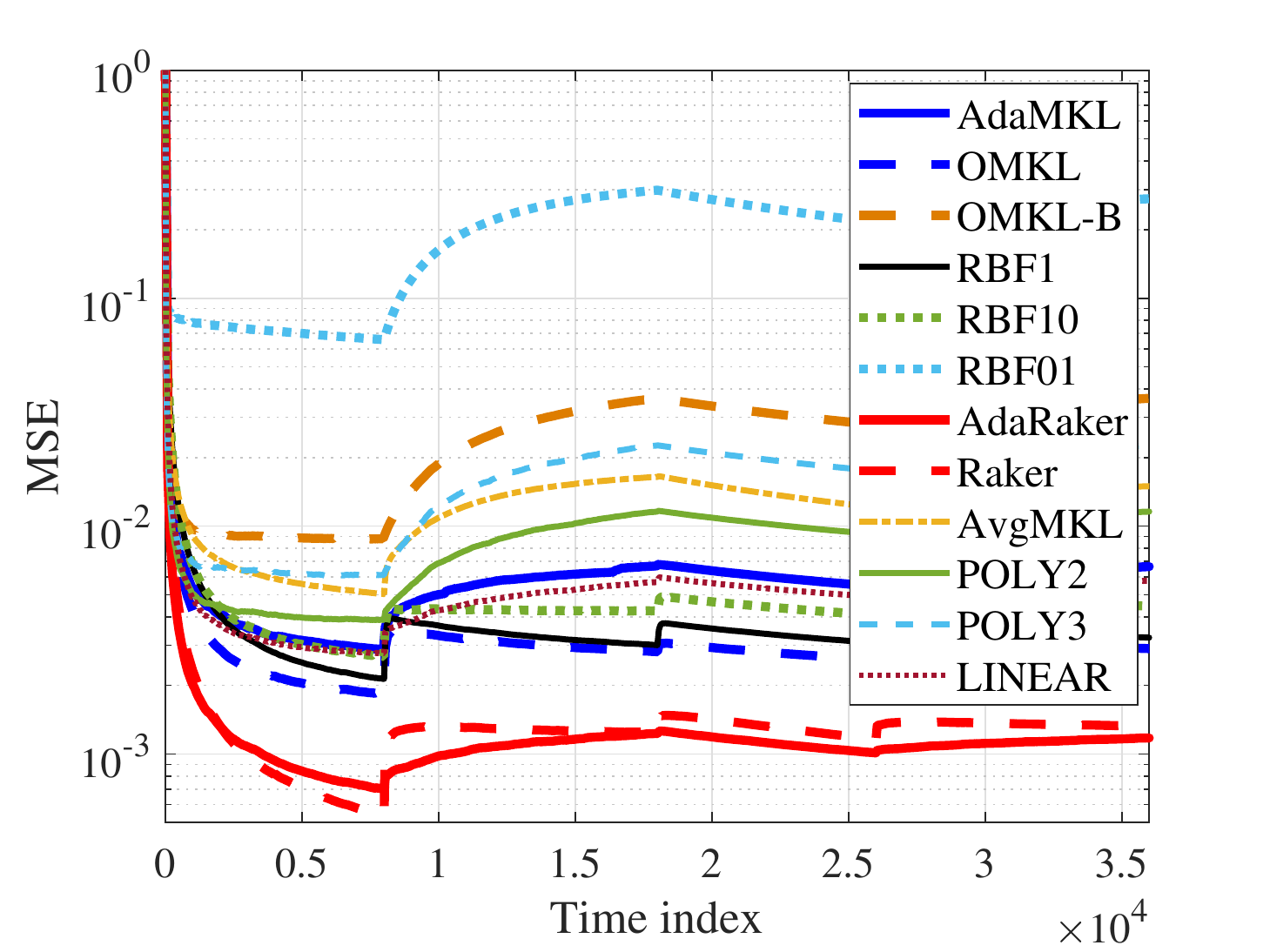}
\\
(a)& (b)
\end{tabular}
\vspace*{-0.2cm}
  \caption{{MSE performance on synthetic Dataset 1: a) $D=B=20$; b) $D=B = 50$.}} 
\label{fig:syn}
\vspace{-0.2cm}
\end{figure}

\begin{figure}[t]
\begin{tabular}{cc}
\hspace*{-6ex}
\includegraphics[width=8.5cm]{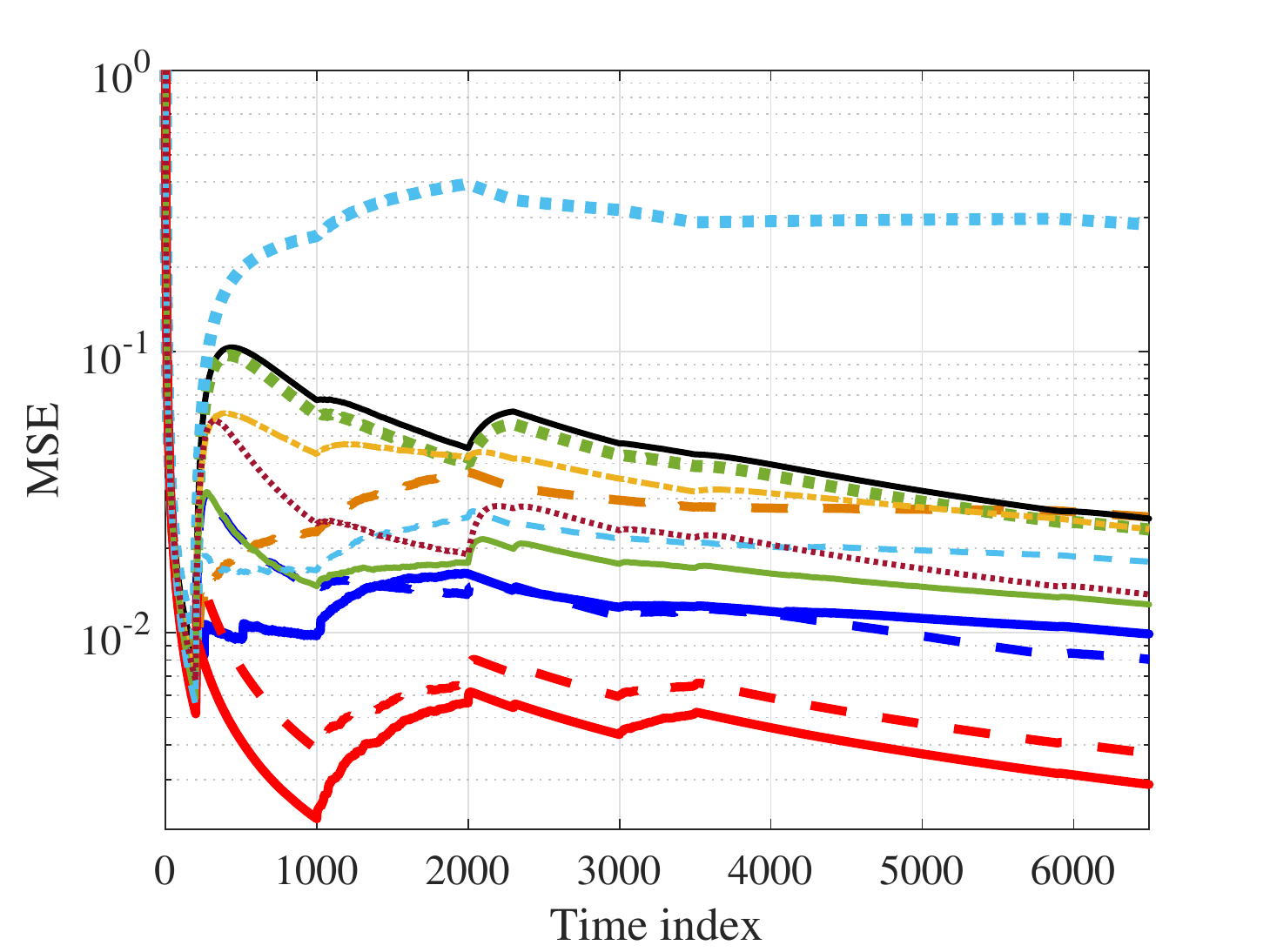}&
\hspace*{-6ex}
\includegraphics[width=8.5cm]{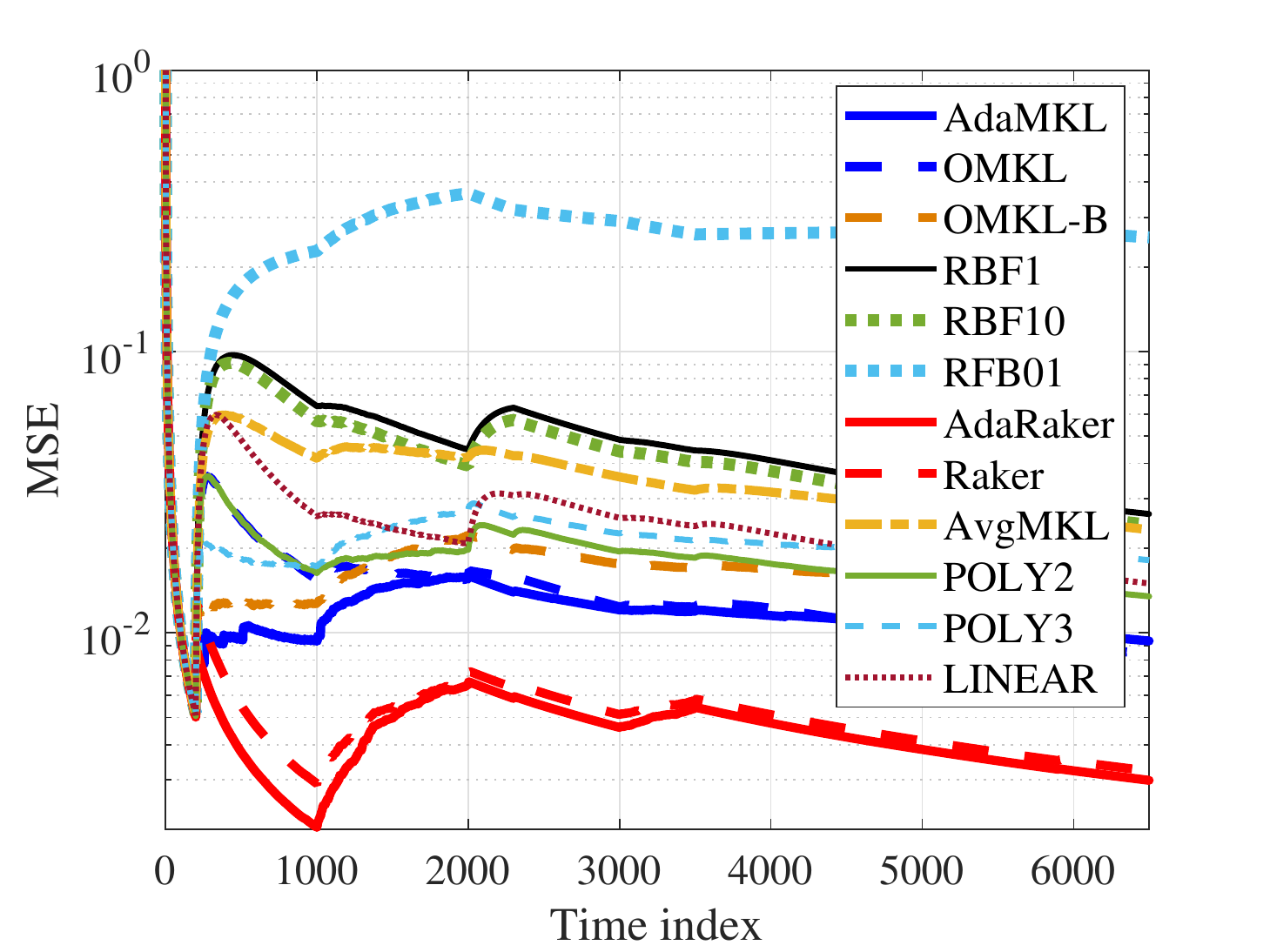}
\\
(a)& (b)
\end{tabular}
\vspace*{-0.2cm}
  \caption{{MSE performance on synthetic Dataset 2: a) $D=B =20$; b) $D=B =50$.}}
\label{fig:syn2}
\vspace{-0.2cm}
\end{figure}

\noindent{\bf Testing performance.} 
The performance of all schemes is tested in terms of the mean-square (prediction) error ${\rm MSE}(t):=(1/t)\sum_{\tau=1}^t{(y_{\tau}-\hat{y}_{\tau})^2}$ in Figure \ref{fig:syn} and Figure \ref{fig:syn2}, and their CPU time is listed in Table \ref{tab:cpu1}. For OMKL-B, $B=20$ and $50$ most recent data samples were kept in the budget; and for RF-based Raker and AdaRaker approaches, $D=20$ and $50$ orthogonal random features were used by default. 
{The default stepsize is chosen as $1/\sqrt{T}$ for RBF, POLY, LINEAR, AvgMKL, OMKL, OMKL-B and Raker.}
In both tests, AdaRaker outperforms the alternatives in terms of MSE, especially when the true nonlinear relationship between $\bbx_t$ and $y_t$ changes; e.g., compare the MSE of KL-RBF and Raker with that of AdaRaker at $t=8000, 18000, 26000$ in Figure \ref{fig:syn}, and $t=200, 2000, 3000, 3500$ in Figure \ref{fig:syn2}. This corroborates the effectiveness of the novel AdaRacker method that can flexibly select learning rates according to the variability of the environments, and adaptively combine multiple kernels when the optimal underlying nonlinear relationship is varying over time. {In addition, MKL approaches including our Raker approach enjoy lower MSE than that of the single-kernel approaches as well as the simple AvgMKL approach,} which is also aligned with our design principle of developing MKL schemes that broaden generalizability of a kernel-based learner over a larger function space.

\begin{table}[t]
{\color{black}
\begin{center}
\small
\vspace{0.2cm}
 \begin{tabu}[h]{ c || c |c| c|c}
\hline 
    &\multicolumn{2}{c|} {\textbf{Dataset 1}}& \multicolumn{2}{|c} {\textbf{Dataset 2}} \\ \hline 
          \textbf{Setting} & $D=B=20$ & $D=B=50$ & $D=B=20$ & $D=B=50$\\ \hline \hline 
    AdaMKL & \multicolumn{2}{c|} {$318.52$} & \multicolumn{2}{|c} {$27.29$} \\ \hline
   OMKL & \multicolumn{2}{c|} {$157.10$} & \multicolumn{2}{|c} {$5.47$} \\ \hline
    RBF & \multicolumn{2}{c|} {$47.83$} & \multicolumn{2}{|c} {$1.06$}\\ \hline
\rowfont{\color{black}} POLY2 & \multicolumn{2}{c|} {$6.01$} & \multicolumn{2}{|c} {$0.47$}\\ \hline
\rowfont{\color{black}} POLY3 & \multicolumn{2}{c|} {$28.27$} & \multicolumn{2}{|c} {$1.24$}\\ \hline
\rowfont{\color{black}} LINEAR & \multicolumn{2}{c|} {$4.80$} & \multicolumn{2}{|c} {$0.35$}\\ \hline
 \rowfont{\color{black}} AvgMKL & \multicolumn{2}{c|} {$144.85$} & \multicolumn{2}{|c} {$5.02$}\\ \hline
    OMKL-B&$3.75$  & $4.05$ &$0.72$  & $0.77$\\ \hline
    Raker&$1.39$  & $1.53$ &$0.18$  & $0.20$\\ \hline
    AdaRaker&$21.94$  & $24.24$ &$3.32$  & $3.54$  \\ \hline
      \end{tabu}
\end{center}}
\vspace{-0.2cm}
\caption{CPU time (in seconds) on synthetic datasets. RBF, POLY represents all single-kernel methods using RBF and polynomial kernels, since they have the same CPU time.}
\label{tab:cpu1}
\end{table}

{Table \ref{tab:cpu1} records the CPU time of all benchmark algorithms running tests on two different datasets.} It can be observed that leveraging the RF-based approximation, the proposed AdaRaker and Raker algorithms are much faster than AdaMKL and OMKL; hence, they are preferable especially for large-scale datasets. Although the CPU time of OMKL-B with a budget size $B=20$ or $B=50$ is relatively low, OMKL-B does not perform as well as AdaRaker and Raker algorithms. Therefore, the AdaRaker and Raker approaches attain a sweet-spot in the performance-complexity tradeoff.

\begin{table}
\small
{\color{black} 
\begin{center}
\vspace{0.2cm}
 \begin{tabular}[h]{ c || c | c |c }
\hline 
     \textbf{Dataset} & \textbf{\# features} ($d$) & \textbf{\# samples} ($T$)& \textbf{feature type} \\ \hline \hline
    Twitter & $77$ &$14,000$  & real \& integer \\ \hline
    Twitter (Large)& $77$ & $100,000$ & real \& integer\\ \hline
    Tom's hardware& $96$  & $10,000$ &  real \& integer \\ \hline
    Energy& $27$ & $18,600$& real \\ \hline
    Air quality&$13$  & $9,358$ & real \\ \hline
      \end{tabular}
\end{center}
\vspace{-0.3cm}
\caption{A summary of real datasets used in the tests.}
\label{tab:1}
\vspace{-0.2cm}
}
\end{table}

{\subsection{Real data tests for online regression}}
 
To further evaluate our algorithms in real-world scenarios, the present subsection is devoted to testing and comparing on several popular real datasets. 

\noindent\textbf{Datasets description.} Performance is tested on benchmark datasets from UCI machine learning repository \citep{Lichman:2013}. 

\begin{itemize}\itemsep0.08em 
	\item \textbf{Twitter} dataset consists of $T=14, 000$ samples from a popular micro-blogging platform Twitter, where $\bbx_t\in\mathbb{R}^{77}$ include features such as the number of new interactive authors, and the length of discussion on a given topic, while $y_t$ represents the average number of active discussion (popularity) on a certain topic \citep{kawala2013predictions}. {A larger  dataset with $T= 100,000$  is also included for testing only (Ada)Raker and OMKL-B, since other methods do not scale to such a large $T$.}
	\item \textbf{Tom's hardware} dataset contains $T=10,000$ samples from a worldwide new technology forum, where a $96$-dimensional feature vector includes the number of discussions involving a certain topic, while $y_t$ represents the average number of display about a certain topic on  Tom's hardware \citep{kawala2013predictions}. 
	\item \textbf{energy} dataset consists of $T=18,600$ samples, with each $\bbx_t\in\mathbb{R}^{27}$ describing the humidity and temperature indoors and outdoors, while $y_t$ denotes the energy use of light fixtures in the house \citep{candanedo2017data}. 
	\item \textbf{air quality} dataset collects $T=9,358$ instances of hourly averaged responses from five chemical sensors located in a polluted area of Italy. 
	The averaged sensor response $\bbx_t\in \mathbb{R}^{13}$ contains the hourly concentrations of  e.g., CO, Non Metanic Hydrocarbons, and Nitrogen Dioxide (NO2), where the goal is to predict the concentration of polluting chemicals $y_t$ in the air \citep{de2008field}. 
\end{itemize}
  
To highlight the effectiveness of our approaches, the datasets mainly include time series data, where non-stationarity is more likely to happen; see Table \ref{tab:1} for a summary. 

\begin{table}
\small
\begin{center}
\vspace{0.1cm}
 \begin{tabu}[h]{ c || c|c | c |c }
\hline 
\textbf{Algorithms/ Datasets}&~Twitter~ &~ Tom's ~& Energy&~ Air ~ \\ \hline \hline
     RBF ($\sigma^2=0.1$) & $27.0$ & $14.4$ &$28.9$  & $26.3$   \\ \hline
    RBF ($\sigma^2=1$) &$13.5$  &  $17.0$& $28.8$ & $12.7$ \\ \hline
     RBF ($\sigma^2=10$) & $23.3$&  $18.8$ &$28.8$ & $15.5$  \\ \hline
    \rowfont{\color{black}} POLY2&$12.7$&$22.3$&$28.8$&$7.34$ \\ \hline
    \rowfont{\color{black}} POLY3&$20.4$&$22.7$&$28.9$& $5.91$ \\ \hline
    \rowfont{\color{black}} LINEAR &$8.57$&$19.5$&$28.8$&$10.7$ \\ \hline
    \rowfont{\color{black}} AvgMKL &$14.4$&$17.5$&$28.7$& $11.9$\\ \hline
    OMKL&$8.55$   &  $14.3$ &$28.1 $  & $6.4$ \\ \hline
     AdaMKL& $16.1$& $18.4$& $30.4$& $10.1$ \\ \hline
     OMKL-B ($B=50$) & $27.0$ &$22.1$&$73.3$ &$35.9$   \\ \hline
    Raker ($D=50$) & $3.0$ & $3.4$& $19.3$& $2.0$ \\ \hline
    AdaRaker ($D=50$) & $\mathbf{2.6}$ &$\mathbf{1.9}$ & $\mathbf{13.8}$ &$\mathbf{1.3}$   \\ \hline
    \end{tabu}
\end{center}
\vspace{-0.3cm}
\caption{MSE ($10^{-3}$) performance of different algorithms with stepsize $1/\sqrt{T}$.}
\label{tab:mse1}
\end{table}

\begin{table}
\small
\begin{center}
\vspace{0.1cm}
 \begin{tabu}[h]{ c || c|c | c |c }
\hline 
\textbf{Algorithms/ Datasets}&~Twitter~ &~ Tom's ~& Energy&~ Air ~ \\ \hline \hline
     RBF ($\sigma^2=0.1$) & $17.2$ & $3.3$ &$16.6$  & $8.1$    \\ \hline
         RBF ($\sigma^2=1$) &$3.3$  &  $5.1$& $16.4$ & $2.8$ \\ \hline
     RBF ($\sigma^2=10$) & $5.6$&  $13.6$ &$16.4$ & $18.9$  \\ \hline
      \rowfont{\color{black}} POLY2&$8.1$&$15.9$&$16.2$&$3.3$ \\ \hline
    \rowfont{\color{black}} POLY3&$20.4$&$20.7$&$16.2$&$4.6$ \\ \hline
    \rowfont{\color{black}} LINEAR&$2.7$&$4.8$&$16.3$& $2.9$\\ \hline
    \rowfont{\color{black}} AvgMKL &$7.1$&$6.2$&$16.3$&$2.8$ \\ \hline
    OMKL & $4.2$   & $3.3$ &$16.2 $  & $2.4$ \\ \hline
     AdaMKL& $16.1$& $18.4$& $30.4$& $10.1$ \\ \hline
     OMKL-B ($B=50$) & $9.9$ &$11.8$&$19$ &$7.1$    \\ \hline
    Raker ($D=50$) & $2.9$ & $2.6$& $\mathbf{13.8}$& $\mathbf{1.3}$ \\ \hline
        AdaRaker ($D=50$) & $\mathbf{2.6}$ &$\mathbf{1.9}$ & $\mathbf{13.8}$ &$\mathbf{1.3}$  \\ \hline
    \end{tabu}
\end{center}
\vspace{-0.3cm}
\caption{MSE ($10^{-3}$) performance of different algorithms with optimally chosen stepsizes.}
\label{tab:mse:opt}
\vspace{-0.1cm}
\end{table}

\noindent\textbf{MSE performance.} 
 The MSE performance of each algorithm on the aforementioned datasets is presented in Table \ref{tab:mse1}. 
 By default, we use the complexity $B=D=50$ for OMKL-B and (Ada)Raker, and the stepsize {\small$1/\sqrt{T}$} for RBF, {POLY, LINEAR, AvgMKL}, OMKL, OMKL-B and Raker. To boost the performance of each algorithm, their MSE when using manually tuned stepsizes is also reported in Table \ref{tab:mse:opt}, which selects the best stepsize on each dataset among $\{10^{-3},10^{-2},\cdots,10^3\}/\sqrt{T}$. 
A common observation is that leveraging the flexibility of multiple kernels, MKL methods in most cases outperform the algorithms using only a single kernel. 
{By simply averaging over all the kernels, AvgMKL outperforms most of single kernel methods, but performs worse than the adaptive kernel combination methods.} 
This confirms that relying on a pre-selected kernel function is not sufficient to guarantee low fitting loss, while allowing the MKL approaches to select the best kernel combinations in a data-driven fashion holds the key for improved performance.

In most tested datasets,
Raker obtains function approximants with lower MSE relative to MKL alternatives without RF approximation. 
Furthermore, incorporating multiple Raker instances with variable learning rates, AdaRaker consistently yields the lowest MSE in all the tests.
As it has been shown in the synthetic data test, the sizable performance gain of AdaRaker appears when the underlying nonlinear models change in the tested time-series datasets. 
This observation is aligned with our design principle of AdaRaker; that is, when the optimal function predictor varies slowly (fast), AdaRaker tends to select a Raker instance with small (large) learning rate. 
Interesting enough, even with adaptive learning rate, AdaMKL does not perform as well as OMKL in some tests. This is partially because unlike AdaRaker with fixed number of RFs, each instance in AdaMKL involves a different \emph{number of support vectors} (samples). The instance operating on the longest interval contains at most $T/2$ support vectors, which may deteriorate performance relative to OMKL with $T$ support vectors.

\begin{table}
\small
{\begin{center}
 \begin{tabular}[h]{ c || c | c |c|c||c|c|c|c||c }
\hline 
\textbf{MSE} &\multicolumn{4}{|c||}{\textbf{OMKL-B}}& \multicolumn{4}{|c||}{\textbf{Raker}}& \!\!\textbf{AdaRaker}\!\!\\ \hline 
\!\!\textbf{Stepsize}\!\!&\!\!$1/\sqrt{T}$\!\! &\!\! $0.5/\sqrt{t}$ \!\!& \!\!$0.1/\sqrt{t}$\!\!& \!\!Tuned \!\!& \!\!$1/\sqrt{T}$ \!\! &\!\! $0.5/\sqrt{t}$ \!\!& \!\!$0.1/\sqrt{t}$ \!\!&\!\! Tuned \!\!& \!\! / \!\!\\ \hline \hline
    Twitter & $27.0$ &$27.1$  &  $29.6$& $9.9$ & $3.0$ & $17.9$ &$4.3$& $2.9$ &$\mathbf{2.6}$\\ \hline
    Tom's& $22.1$  & $22.1$ & $22.6$ & $11.8$&$3.4$  & $2.0$ &$7.6$ & $2.6$ &$\mathbf{1.9}$  \\ \hline
    Energy& $73.3$ & $74.1$&  $79.5$ & $19.0$ &$19.3$ & $29.5$ & $25.1$ &$\mathbf{13.8}$&$\mathbf{13.8}$ \\ \hline
    Air &$35.9$  & $35.9$ &  $40.1$ &$7.1$ &$2.0$  & $29.1$&$4.0$ &$\mathbf{1.3}$ &$\mathbf{1.3}$\\ \hline
 \!\!\!\!  Twitter (Large) \!\!\!\!& $20.7$ & $27.2$ & $28.0$ & $11.3$&$3.2$ & $3.1$ & $3.3$& $3.0$ & $\mathbf{2.7}$\\ \hline
    \end{tabular}
\end{center}
\vspace{-0.3cm}
\caption{MSE ($10^{-3}$) versus the choice of stepsizes with complexity $B=D=50$.}
\label{tab:2}}
\end{table}

{Table \ref{tab:2} further compares the MSE performance of AdaRaker with OMKL-B and Raker using different  stepsizes. 
Clearly, the performance of OMKL-B and Raker is sensitive to the choice of stepsizes. While the optimal stepsize varies from dataset to dataset, selecting a constant stepsize {\small$1/\sqrt{T}$} generally leads to better performance than a diminishing one of ${\cal O}(1/\sqrt{t})$. 
In the online scenarios however, the choice  {\small$1/\sqrt{T}$} may not be feasible if $T$ is unknown ahead of time. 
In contrast, AdaRaker obtained the best MSE performance without knowing $T$, and without the need of stepsize selection, which confirms that AdaRaker is capable of adapting its stepsize to variable environments with unknown dynamics.
}

\noindent\textbf{Computational complexity.} The CPU time of all the considered schemes is recorded under all the tests; see Table \ref{tab:3}. 
It is evident that in all tests, our RF-based MKL methods including Raker and AdaRaker are computationally more efficient than other MKL methods except that OMKL-B is faster than AdaRaker. 
Intuitively speaking, the per-slot complexity of Raker does not grow with time, since it  requires computing only one inner product of two $2D$-dimensional vectors per kernel learner, while the computational complexity of AdaMKL, OMKL, {POLY, LINEAR, AvgMKL}, and RBF increases with time at least linearly.
With a fixed budget size, OMKL-B enjoys light-weight updates that leads to a lower CPU time than alternatives, but higher than Raker. 
However, given such a limited budget of data, OMKL-B exhibits higher MSE than AdaRaker and Raker; see MSE in Tables \ref{tab:mse1} and \ref{tab:2}.

 \begin{table}[t]
 \small
 \begin{center}
    \begin{tabu}[h]{ c || c|c | c |c }
\hline 
\textbf{Algorithms/ Datasets}&~Twitter~ &~ Tom's ~& Energy&~ Air ~ \\ \hline \hline
     RBF & $54.7$&  $32.5$ &$26.6$ & $1.71$  \\ \hline
     \rowfont{\color{black}} POLY2&$2.25$&$1.16$&$2.42$&$0.58$ \\ \hline
    \rowfont{\color{black}} POLY3&$5.62$&$2.97$&$7.90$&$1.51$ \\ \hline
    \rowfont{\color{black}} LINEAR&$1.83$&$0.98$&$196$& $0.39$\\ \hline
    \rowfont{\color{black}} AvgMKL &$148.4$&$81.6$&$82.4$&$5.29$ \\ \hline    OMKL&$153.5$   &  $81.9$ &$83.3 $  & $5.90$  \\ \hline
         AdaMKL& $164.1$& $102.7$& $117.9$& $35.3$ \\ \hline
    OMKL-B ($B=50$) & $1.89$ &$1.42$&$2.02$ &$0.89$  \\ \hline
    Raker ($D=50$) & $\mathbf{0.51}$ & $\mathbf{0.38}$& $\mathbf{0.65}$& $\mathbf{0.28}$
    \\ \hline
    AdaRaker ($D=50$) & $8.64$ &$6.03$ & $10.94$ &$5.28$  \\ \hline
    \end{tabu}
\end{center}
\vspace{-0.3cm}
\caption{A summary of CPU time (second) on real datasets.}
\label{tab:3}
\end{table}

Running multiple instances of Raker in parallel, the complexity of AdaRaker is reasonably higher than Raker (roughly $\log T$ times higher), but its runtime is still only around $10\%$ of that of AdaMKL, and significantly lower than other single-kernel alternatives especially when the actual feature dimension $d$ is higher than the number of random features $D$.
The computational advantage of our MKL algorithms in this test also corroborates the quantitative analysis at the end of Section \ref{sec.rf-mkl}. 
{Regarding the tradeoff between learning accuracy and complexity, a delicate comparison among OMKL-B, Raker and AdaRaker follows next.}

 \begin{table}[t]
 \small
\begin{center}
\vspace{-0.1cm}
{\color{black}
 \begin{tabular}[h]{ c || c | c |c|c|c|c|c|c|c }
\hline 
\textbf{MSE}&\multicolumn{3}{|c|}{\textbf{OMKL-B}}& \multicolumn{3}{|c}{\textbf{Raker}}& \multicolumn{3}{|c}{\textbf{AdaRaker}}\\ \hline 
\textbf{Complexity}&~ $10$ ~ &~ $50$ ~& $100$&~ $10$ ~&~ $50$ ~& $100$ &~ $10$ ~&~ $50$ ~& $100$ \\ \hline \hline
    Twitter & $28.9$ &$27.0$  &  $26.1$& $5.9$ & $3.0$ &$3.0$& $3.8$ & $\mathbf{2.6}$&$\mathbf{2.6}$\\ \hline
    Tom's& $22.7$  & $22.1$ & $21.7$ &$8.1$  & $3.4$ &$2.3$ & $7.0$  & $1.9$ &$\mathbf{1.8}$  \\ \hline
    Energy& $79.1$ & $73.3$&  $67.9$ &$25.7$ & $19.3$ & $16.4$ & $18.7$ & $13.8$ & $\mathbf{13.3}$\\ \hline
    Air pollution&$36.7$  & $35.9$ &  $35.8$ &$10.1$  & $2.0$&$1.7$ &$4.3$  & $1.3$&$\mathbf{1.2}$\\ \hline
    Twitter (Large) &$25.0$ & $20.7$& $19.0$ &$3.9$ & $3.2$& $3.0$& $3.3$ & $\mathbf{2.7}$& $\mathbf{2.7}$  \\ \hline
    \end{tabular}
    }
\end{center}
\vspace{-0.3cm}
\caption{MSE ($10^{-3}$) versus complexity. For OMKL-B, the complexity measure is the data budget $B$; and for (Ada)Raker, the complexity measure is the number of RFs $D$.}
\label{tab:4}
\end{table}

\begin{table}[t]
\small
\begin{center}
{\color{black}
\begin{tabular}[h]{ c || c | c |c|c |c |c|c|c|c }
\hline 
\textbf{Time} &\multicolumn{3}{|c|}{\textbf{OMKL-B}}& \multicolumn{3}{|c}{\textbf{ Raker}}& \multicolumn{3}{|c}{\textbf{AdaRaker}} \\ \hline \textbf{Complexity}
     &~ $10$ ~&~ $50$ ~& $100$ &~ $10$ ~&~ $50$ ~& $100$&~ $10$ ~&~ $50$ ~& ~$100$~ \\ \hline \hline
    Twitter & $1.42$ &$1.89$  &  $2.84$& $0.42$ & $0.51$ &$0.80$& $7.58$ & $8.64$ &$11.65$\\ \hline
    Tom's& $1.00$  & $1.42$ & $2.81$& $0.41$ &$0.38$  &$0.56$ & $5.09$  & $6.03$ &$8.98$  \\ \hline
    Energy& $1.84$ & $2.02$&  $2.32$& $0.58$ & $0.65$ & $0.76$ & $9.96$ & $10.94$ & $12.47$\\ \hline
    Air pollution&$0.82$  & $0.89$ &  $0.97$&$0.24$  & $0.28$&$0.32$ &$4.09$  & $5.28$&$5.29$\\ \hline
    Twitter (Large) & $12.90$ & $16.34$ &$23.6$ & $6.07$ &$6.63$& $8.42$ & $67.10$ & $78.10$& $109.40$  \\ \hline
    \end{tabular}}
\end{center}
\vspace{-0.3cm}
\caption{CPU time (second) versus complexity of $B$ for OMKL-B, and $D$ for (Ada)Raker.}\label{tab:5}
\end{table}

\noindent\textbf{Accuracy versus complexity.} To further understand the tradeoff between complexity and learning accuracy, the performance of three scalable methods AdaRaker, Raker and OMKL-B is tested under different parameter settings, e.g., $D$, the number of random features, and $B$, the number of budgeted data. 
The MSE performance is reported in Table \ref{tab:4} after one pass of all data in each dataset, while the corresponding CPU time is in Table \ref{tab:5}. 

Not surprising, all three algorithms require longer CPU time as the complexity (in terms of $B$ or $D$) increases. 
For given complexity (same $B$ and $D$), Raker requires the lowest CPU time, and its MSE is also markedly lower than that of OMKL-B in all tests. 
{On the other hand, AdaRaker always attains the lowest MSE, and its performance gain is remarkable especially in the Energy and Air pollution datasets. 
For Twitter (Large) dataset, the performance of AdaRaker does not improve as RFs increase from $D=50$ to $D=100$, which implies that $D=50$ is enough to provide reliable kernel approximation in this dataset.} 
Considering that AdaRaker is embedded with concurrent $\log t$ Raker instances at time $t$, its CPU time is relatively higher. However, one would expect a major reduction in the number of concurrent instances and thus markedly lower CPU time, if a larger basic interval size (instead of base number $2$ in Figure \ref{fig:meta}) is incorporated in AdaRaker real implementation. 

At this point, one may wonder how many RFs are enough for Raker and AdaRaker to guarantee the same online learning accuracy as that of OMKL-B with $B$ samples. While this intriguing question has been recently studied in the batch setting with an answer of $D={\cal O}(\sqrt{B})$ RFs \citep{rudi2017}, its thorough treatment in the online setting constitutes our future research.


  \begin{table}[t]
 \small
\begin{center}
\vspace{-0.1cm}
 \begin{tabu}[h]{ c || c| c | c |c|c|c}
\hline 
&\multicolumn{3}{c|}{\textbf{Classification error}}& \multicolumn{3}{c}{\textbf{CPU time}}\\\hline
\textbf{Algorithms/Datasets}&Movement&Devices& ~Activity~&Movement&Devices&~Activity~\\ \hline \hline
     RBF ($\sigma^2=0.1$)& $43.1$  & $6.67$ & $0.46$ &$2.76$ &$2.13$  & $4.42$   \\ \hline
     RBF ($\sigma^2=1$) & $41.3$ &$28.1$  &$5.21$ & $2.79$& $2.04$ &$4.42$ \\ \hline
     RBF ($\sigma^2=10$)& $40.3$ & $31.8$& $41.1$ & $2.62$ &$2.09$& $4.43$\\ \hline
      \rowfont{\color{black}} POLY2&$43.5$&$14.3$&$3.13$&$1.63$&$0.31$& $0.81$ \\ \hline
    \rowfont{\color{black}} POLY3&$43.6$&$25.2$&$2.39$&$4.75$&$0.57$ & $1.79$\\ \hline
    \rowfont{\color{black}} LINEAR&$43.8$&$47.4$&$4.30$& $1.26$&$0.21$& $0.60$\\ \hline
    \rowfont{\color{black}} AvgMKL  &$41.7$&$23.9$&$3.16$&$10.2$&$6.72$& $14.67$ \\ \hline
    OMKL&$38.2$  & $16.0$ &$0.60$ & $10.27$ &$7.07$ & $14.46$ \\ \hline
         AdaMKL & $3.5$ & $0.86$ & $0.98$ &$33.77$ &$10.51$& $21.46$ \\ \hline
    M-Forgetron ($B=50$)& $1.64$ & $0.53$ &$1.14$&$0.92$&$0.27$ & $0.53$  \\ \hline
    Raker ($D=50$) &$9.74$ & $2.54$&$0.58$ &$\mathbf{0.40}$& $\mathbf{0.12}$ & $\mathbf{0.21}$\\ \hline
    AdaRaker ($D=50$)& $\mathbf{1.10}$ & $\mathbf{0.36}$ & $\mathbf{0.34}$&$6.76$ & $1.73$ & $3.56$\\ \hline
    \end{tabu}
\end{center}
\vspace{-0.2cm}
\caption{Classification error ($\%$) and runtime (second) of different algorithms with the default stepsize $1/\sqrt{T}$ for RBF, OMKL and Raker, and with complexity $B=D=50$.} \label{tab:class1}
\end{table}

\subsection{Real data tests for online classification}
In  this  section, the performance of Raker and AdaRaker is tested on real datasets for the online classification task. We use the logistic loss as the learning objective function with the regularization parameter $\lambda=0.005$ for all considered approaches except for the perceptron-based Forgetron algorithm. Kernels and all other parameters such as the default stepsizes, are chosen as those in the regression task.

  \begin{table}[t]
 \small
\begin{center}
\vspace{-0.1cm}
 \begin{tabu}[h]{ c || c| c | c }
\hline 
&\multicolumn{3}{c}{\textbf{Classification error}}\\\hline
\textbf{Algorithms/Datasets}&~Movement~&~Devices~& ~Activity~\\ \hline \hline
     RBF ($\sigma^2=0.1$)& $28.9$  & $5.16$ & $0.26$    \\ \hline
     RBF ($\sigma^2=1$) & $1.27$ &$0.42$  & $0.53$ \\ \hline
     RBF ($\sigma^2=10$)& $\mathbf{1.10}$ & $0.36$&  $1.14$\\ \hline
       \rowfont{\color{black}} POLY2&$8.19$&$1.7$&$0.56$ \\ \hline
    \rowfont{\color{black}} POLY3&$15.2$&$17.3$&$0.45$ \\ \hline
    \rowfont{\color{black}} LINEAR&$7.46$&$30.7$&$0.60$\\ \hline
    \rowfont{\color{black}} AvgMKL &$1.69$&$2.26$&$0.48$ \\ \hline
    OMKL&$\mathbf{1.10}$  & $0.36$ & $0.29$\\ \hline
     AdaMKL & $3.50$ & $0.86$ & $1.00$\\ \hline
   M-Forgetron ($B=50$)& $1.64$ & $0.53$ & $1.14$ \\ \hline
    Raker ($D=50$) &$\mathbf{1.10}$ & $\mathbf{0.28}$& $\mathbf{0.24}$ \\ \hline
    AdaRaker ($D=50$)& $\mathbf{1.10}$ & $0.36$ &$0.34$\\ \hline
    \end{tabu}
\end{center}
\vspace{-0.2cm}
\caption{Classification error ($\%$) of different algorithms with the dataset-specific optimally chosen stepsizes for RBF, OMKL and Raker, and with complexity $B=D=50$.} \label{tab:class1-opt}
\end{table}

\noindent\textbf{Datasets description.} We test classification performance on the following datasets.
\vspace{-0.2cm}

\begin{itemize}\itemsep0.08em 
 \item\textbf{Movement} dataset consists of $T=13,197$ temporal streams of received signal strength (RSS) measured between the nodes of a wireless sensor network, with each $\bbx_t\in\mathbb{R}^{4}$  comprising $4$ anchor nodes \citep{bacciu2014experimental}. Data has been collected during user movements at the frequency of $8$ Hz ($8$ samples per second). The RSS samples in the dataset have been rescaled to lie in $[-1,1]$. The binary label $y_t$ indicates whether the user's trajectory will lead to a change in the spatial context (here a room change) or not. 
 
  \item\textbf{Electronic Device} dataset consists of $T=3,600$ samples collected as part of a government sponsored study called `Powering the Nation,' where the feature vectors $\bbx_t \in \mathbb{R}^{60}$ represent electricity readings from different households over $15$ mins, sampled within a month \citep{lines2011classification}.  Binary label $y_t$ represents the type of electronic devices used at the certain interval of time time: dishwasher or kettle.
 
\item\textbf{Human Activity} dataset consists of $T=7,352$ samples collected from a group of 30 volunteers wearing a smartphone (Samsung Galaxy S II) on their waist to monitor activities \citep{anguita2013}. Feature vectors $\{\bbx_t \in \mathbb{R}^{30}\}$ here measure e.g., triaxial acceleration and angular velocity, while binary label $y_t$ represents the activity during a certain period: walking or not walking.
 \end{itemize}
 \vspace{-0.1cm}

 \begin{table}[t]
\small
\begin{center}
 \begin{tabular}[h]{ c || c |c | c |c||c|c |c|c||c }
\hline 
&\multicolumn{4}{|c||}{\textbf{OMKL}}& \multicolumn{4}{|c||}{\textbf{Raker}}& \textbf{AdaRaker}\\ \hline 
\textbf{Stepsize}&$1/\sqrt{T}$ & $1/\sqrt{t}$ & $10/\sqrt{t}$& Tuned &$1/\sqrt{T}$  & $1/\sqrt{t}$ & $10/\sqrt{t}$& Tuned & /  \\ \hline \hline
    Movement & $38.2$ &$39.5$  &  $22.3$& $1.10$& $12.1$ & ${8.60}$ &$1.79$& $1.10$& $\mathbf{1.10}$\\ \hline
    Devices& $16.0$  & $13.2$ & $6.06$& $0.36$&$2.54$  & $2.04$ &$0.53$& $\mathbf{0.28}$& $0.36$  \\ \hline
    Activity& $0.60$  & $0.53$ & $0.50$& $0.29$&$0.58$  & $0.52$ &$0.54$& $\mathbf{0.24}$ & $0.34$  \\ \hline
        \end{tabular}
\end{center}
\vspace{-0.3cm}
\caption{Classification error ($\%$)  versus different choices of stepsizes with  $B=D=50$.}
\label{tab:class:st}
\end{table}

 \begin{table}[t]
 \small
\begin{center}
 \begin{tabular}[h]{ l || c | c |c|c|c|c}
\hline 
&\multicolumn{3}{c|}{\textbf{Classification error}}& \multicolumn{3}{c}{\textbf{CPU time}}\\\hline
\textbf{Algorithms/ Datasets}\!&Movement&~Devices~& ~Activity~&Movement&~Devices~&~Activity~\\ \hline \hline
    M-Forgetron ($B=10$) & $1.60$ &$0.53$  &$1.14$&  $0.92$& $0.26$ & $0.53$ \\ \hline
    M-Forgetron ($B=50$)& $1.64$ & $0.53$ &$1.14$&$0.92$&$0.27$ & $0.53$  \\ \hline
    M-Forgetron ($B=100$)& $1.42$ & $0.53$&$1.14$ & $0.94$ &$0.29$& $0.53$ \\ \hline        
    Raker ($D=10$) & $26.3$ & $8.37$ &$3.26$& $\mathbf{0.35}$ &$\mathbf{0.10}$& $\mathbf{0.18}$\\ \hline
    Raker ($D=50$) &$9.74$ & $2.54$&$0.58$ &${0.40}$& ${0.12}$ & ${0.21}$\\ \hline         
    Raker ($D=100$) & $4.65$ & $1.53$ &$0.43$&$0.46$&$0.15$ & $0.26$ \\ \hline 
    
	AdaRaker ($D=10$) &$2.46$ & ${0.66}$&$0.68$& $6.13$& $0.65$ & $3.22$\\ \hline 
    AdaRaker ($D=50$)& $\mathbf{1.10}$ & $\mathbf{0.36}$ & $\mathbf{0.34}$&$6.76$ & $1.73$ & $3.56$\\ \hline  
    AdaRaker ($D=100$)& $\mathbf{1.10}$ & $\mathbf{0.36}$ &$\mathbf{0.34}$& $7.55$ & $2.04$& $4.22$\\ \hline
    \end{tabular}
\end{center}
\vspace{-0.2cm}
\caption{Classification error ($\%$) and CPU time (second) versus complexity.}
\label{tab:class2}
\vspace{-0.1cm}
\end{table}

 \noindent\textbf{Classification performance.}
 The classification error $(1/T)\sum_{t=1}^T\max\{0,{\rm sign}(-y_t\hat{y}_t)\}$ and the CPU time of each algorithm on these datasets are  summarized in Table \ref{tab:class1} when a default stepsize $1/\sqrt{T}$ is used for {POLY, LINEAR, RBF, AvgMKL}, OMKL and Raker. The budget of M-Forgetron is set at $B=50$ samples, while Raker and AdaRaker adopt $D=50$ RFs. 
As with the regression tests, it is evident that AdaRaker attains the highest classification accuracy and the Raker has the lowest CPU time among all competing algorithms. Without having to tune stepsizes, the performance of AdaMKL and M-Forgetron is also competitive in this case. 
To explore the best performance of each algorithm, the classification performance under manually tuned stepsizes is reported in Table \ref{tab:class1-opt}, where each algorithm uses the best stepsize among $\{10^{-3},10^{-2},\cdots,10^3\}/\sqrt{T}$ for each dataset. With the optimally chosen stepsizes, the performance of all algorithms improves, and Raker even achieves slightly lower classification error than AdaRaker in some datasets. This is reasonable since compared to Raker with the offline tuned stepsize, AdaRaker will incur some error due to the online adaptation to several (possibly suboptimal) learning rates. 
 
To corroborate the effectiveness of our algorithms in adapting to unknown dynamics (e.g., unknown time horizon $T$ and variability), Table \ref{tab:class:st} compares the performance of AdaRaker with OMKL and Raker using {default, diminishing and optimally tuned stepsizes.} 
Similar to regression tests, the performance of Raker and OMKL is sensitive to the stepsize choice, while AdaRaker achieves the desired performance by combining learners with different learning rates. 
{By simply averaging over all the kernels, AvgMKL outperforms single kernel methods in most cases, but performs much worse than OMKL and (Ada)Raker methods.} 
Note that the Raker also achieves competitive classification accuracy when the constant stepsize $1/\sqrt{T}$ is used. 
Such a choice is however not always feasible in practice, since it requires knowledge of how many data samples will be available ahead of time.

\noindent\textbf{Accuracy versus complexity.} 
In this experiment, we test classification performance in terms of both classification error and CPU time for different levels of complexity; see Table \ref{tab:class2}. 
We use the number of support vectors $B$ for M-Forgetron, and the number of RFs $D$ for Raker and AdaRaker to represent different levels of complexity, and compare their performance using the default stepsize. It is expected that CPU time increases as the complexity increases, and the classification error decreases as the complexity grows. For all three datasets, the AdaRaker achieves the lowest classification error, and the Raker outperforms the M-Forgetron while at the same time it is more efficient computationally.

\section{Concluding Remarks}\label{sec:conc}
This paper dealt with kernel-based learning in environments with unknown dynamics that also include static or slow variations. Uniquely combining advances in random feature based function approximation with online learning from an ensemble of experts, a scalable online multi-kernel learning approach termed Raker, was developed for static environments based on a dictionary of kernels. Endowing Raker with capability of tracking time-varying optimal function estimators, AdaRaker was introduced as an ensemble version of Raker with variable learning rates. The key modules of the novel learning approaches are: i) the random features are for scalability, as they reduce the per-iteration complexity; ii) the preselected kernel dictionary is for flexibility, that is to broaden generalizability of a kernel-based learner over a larger function space; iii) the weighted combination of kernels adjusted online accounts for the reliability of learners; and, iv) the adoption of multiple learning rates is for improved adaptivity to changing environments with unknown dynamics.

Complementing the principled algorithmic design, the performance of Raker is rigorously established using static regret analysis. Furthermore, without a-priori knowledge of dynamics, it is proved that AdaRaker achieves sub-linear dynamic regret, provided that either the loss or the optimal learning function does not change on average.
Experiments on synthetic and real datasets validate the effectiveness of the novel methods.

\acks{This work is supported in part by the National Science Foundation
under Grant 1500713 and 1711471, and NIH 1R01GM104975-01. 
Yanning Shen is also supported by the Doctoral Dissertation Fellowship from
the University of Minnesota.
}

\appendix
\section{Proof of Lemma \ref{lemma4}}\label{app.pf.lemma4}
To prove Lemma \ref{lemma4}, we introduce two intermediate lemmata as follows.  
\begin{lemma}\label{lemma3}
Under (as1), (as2), and $\hat{f}_p^*$ as in \eqref{eq.slot-opt} with ${\cal F}_p:=\{\hat{f}_p|\hat{f}_p(\bbx)=\bbtheta^{\top}\mathbf{z}_p(\bbx),\,\forall \bbtheta\in\mathbb{R}^{2D}\}$, let $\{\hat{f}_{p,t}(\bbx_t)\}$ denote the sequence of estimates generated by Raker with a pre-selected kernel $\kappa_p$. Then the following bound holds true w.p.1
	\begin{align}
	\sum_{t=1}^T {\cal L}_t(\hat{f}_{p,t}(\bbx_t))\!-\!\sum_{t=1}^T{\cal L}_t(\hat{f}_p^*(\bbx_t))\!\leq\! \frac{\|\bbtheta_p^*\|^2}{2\eta}\!+\!\frac{\eta L^2T}{2}
	\end{align}
	where $\eta$ is the learning rate, $L$ is the Lipschitz constant in (as2), and $\bbtheta_p^*$ is the corresponding parameter (or weight) vector supporting the best estimator  $\hat{f}_p^*(\bbx)=(\bbtheta_p^*)^{\top}\mathbf{z}_p(\bbx)$.
\end{lemma}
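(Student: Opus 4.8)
The plan is to recognize that, for a fixed kernel $\kappa_p$, the Raker update \eqref{eq.klp-weight} is exactly online (sub)gradient descent on the reparametrized losses $\mathcal{L}_t(\bbtheta):={\cal L}(\bbtheta^\top\bbz_p(\bbx_t),y_t)$ over the vector variable $\bbtheta\in\mathbb{R}^{2D}$, and then to invoke the standard OGD regret argument. Since $\hat{f}_{p,t}(\bbx_t)=\bbtheta_{p,t}^\top\bbz_p(\bbx_t)$ by \eqref{eq.klp-output} and $\hat{f}_p^*(\bbx)=(\bbtheta_p^*)^\top\mathbf{z}_p(\bbx)$, the claimed \emph{functional} regret coincides with the \emph{parametric} regret $\sum_{t=1}^T[\mathcal{L}_t(\bbtheta_{p,t})-\mathcal{L}_t(\bbtheta_p^*)]$, so it suffices to bound the latter.

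First I would use convexity of $\mathcal{L}_t$ in $\bbtheta$ from (as1) to linearize each per-slot gap,
\[
\mathcal{L}_t(\bbtheta_{p,t})-\mathcal{L}_t(\bbtheta_p^*)\le \nabla\mathcal{L}_t(\bbtheta_{p,t})^\top(\bbtheta_{p,t}-\bbtheta_p^*).
\]
Next I would expand the squared distance of the next iterate to the optimum using the update $\bbtheta_{p,t+1}=\bbtheta_{p,t}-\eta\,\nabla\mathcal{L}_t(\bbtheta_{p,t})$,
\[
\|\bbtheta_{p,t+1}-\bbtheta_p^*\|^2=\|\bbtheta_{p,t}-\bbtheta_p^*\|^2-2\eta\,\nabla\mathcal{L}_t(\bbtheta_{p,t})^\top(\bbtheta_{p,t}-\bbtheta_p^*)+\eta^2\|\nabla\mathcal{L}_t(\bbtheta_{p,t})\|^2,
\]
and solve for the inner-product term to obtain
\[
\nabla\mathcal{L}_t(\bbtheta_{p,t})^\top(\bbtheta_{p,t}-\bbtheta_p^*)=\frac{\|\bbtheta_{p,t}-\bbtheta_p^*\|^2-\|\bbtheta_{p,t+1}-\bbtheta_p^*\|^2}{2\eta}+\frac{\eta}{2}\|\nabla\mathcal{L}_t(\bbtheta_{p,t})\|^2.
\]

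Then I would sum over $t=1,\dots,T$. The distance terms telescope to $\big(\|\bbtheta_{p,1}-\bbtheta_p^*\|^2-\|\bbtheta_{p,T+1}-\bbtheta_p^*\|^2\big)/(2\eta)$; discarding the nonnegative final term and using the initialization $\bbtheta_{p,1}=\mathbf{0}$ leaves exactly $\|\bbtheta_p^*\|^2/(2\eta)$. For the gradient terms, the bound $\|\nabla\mathcal{L}_t(\bbtheta_{p,t})\|\le L$ from (as2) contributes $\eta L^2 T/2$. Chaining these with the convexity step yields precisely the stated bound.

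This is essentially the textbook OGD regret, so there is no deep obstacle; the only points requiring care are (i) that the gradient bound of (as2) is legitimately applied along the entire generated trajectory, since it is this bound that supplies the $\eta^2 L^2$ term at every step, and (ii) that the zero initialization is what converts the generic diameter term into the clean $\|\bbtheta_p^*\|^2$ appearing in the claim rather than a crude constant.
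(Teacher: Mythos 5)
Your proposal is correct and follows essentially the same route as the paper's own proof: the standard online gradient descent argument combining the convexity linearization, the squared-distance expansion of the update, telescoping with the zero initialization $\bbtheta_{p,1}=\mathbf{0}$, and the gradient bound $L$ from (as2). The only cosmetic difference is that you fix the comparator to $\bbtheta_p^*$ from the outset, whereas the paper carries a generic $\bbtheta$ and specializes to $\bbtheta_p^*$ at the end.
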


\begin{proof}
Similar to the regret analysis of online gradient descent \citep{shalev2011}, using \eqref{eq:weit-rf} for any fixed $\bbtheta$, we find 
\begin{align}
\label{eq:sreg:1}
	\|\bbtheta_{p, t+1}-\bbtheta\|^2=&\|\bbtheta_{p,t}-\eta\nabla{\cal L}(\bbtheta_{p,t}^\top\bbz_p(\bbx_t),y_t)-\bbtheta\|^2\\
	=&\|\bbtheta_{p,t}-\bbtheta\|^2+\eta^2\|\nabla{\cal L}(\bbtheta_{p,t}^\top\bbz_p(\bbx_t),y_t)\|^2-2\eta\nabla^{\top}{\cal L}(\bbtheta_{p,t}^\top\bbz_p(\bbx_t),y_t)(\bbtheta_{p,t}-\bbtheta).\nonumber
\end{align}
Meanwhile, the convexity of the loss under (as1) implies that
\begin{align}\label{eq:sreg:cvx}
	{\cal L}(\bbtheta_{p,t}^\top\bbz_p(\bbx_t),y_t)-{\cal L}(\bbtheta^\top\bbz_p(\bbx_t),y_t)\leq\nabla^{\top}{\cal L}(\bbtheta_{p,t}^\top\bbz_p(\bbx_t),y_t)(\bbtheta_{p,t}-\bbtheta).
\end{align}
Plugging \eqref{eq:sreg:cvx} into \eqref{eq:sreg:1} and rearranging terms yields
\begin{equation}
\label{eq:sreg:2}
{\cal L}(\bbtheta_{p,t}^\top\bbz_p(\bbx_t),y_t)\!-\!{\cal L}(\bbtheta^\top\bbz_p(\bbx_t),y_t)\leq \frac{\|\bbtheta_{p,t}-\bbtheta\|^2-\|\bbtheta_{p,t+1}-\bbtheta\|^2}{2\eta}+\frac{\eta}{2}\|\nabla{\cal L}(\bbtheta_{p,t}^\top\bbz_p(\bbx_t),y_t)\|^2.
\end{equation}
Summing \eqref{eq:sreg:2} over $t=1,\dots,T$, with $\hat{f}_{p,t}(\bbx_t)=\bbtheta_{p,t}^\top\bbz_p(\bbx_t)$, we arrive at
\begin{align}\label{eq:sreg:3}
	&\sum_{t=1}^T\left({\cal L}(\hat{f}_{p,t}(\bbx_t),y_t)\!-\!{\cal L}(\bbtheta^\top\bbz_p(\bbx_t),y_t)\right)\nonumber\\
	\leq &\, \frac{\|\bbtheta_{p,1}\!-\!\bbtheta\|^2-\|\bbtheta_{p,T+1}-\bbtheta\|^2}{2\eta}+\frac{\eta}{2}\sum_{t=1}^T\|\nabla{\cal L}(\bbtheta_{p,t}^\top\bbz_p(\bbx_t),y_t)\|^2\nonumber\\
	\stackrel{(a)}{\leq} &\, \frac{\|\bbtheta\|^2}{2\eta}+\frac{\eta L^2 T}{2}
\end{align}
where (a) uses the Lipschitz constant in (as2), the non-negativity of $\|\bbtheta_{p,T+1}-\bbtheta\|^2$, and the initial value $\bbtheta_{p,1}=\mathbf{0}$. The proof of Lemma \ref{lemma3} is then complete by choosing $\bbtheta=\bbtheta_p^*=\sum_{t=1}^T \alpha_{p,t}^* \bbz_p(\bbx_t)$ such that $\hat{f}_p^*(\bbx_t)=\bbtheta^\top\bbz_p(\bbx_t)$ in \eqref{eq:sreg:3}.
\end{proof}
Lemma \ref{lemma3} establishes that the static regret of the Raker is upper bounded by some constants, which mainly depend on the stepsize in \eqref{eq.klp-weight} and the time horizon $T$.

In addition, we will bound the difference between the loss of the solution obtained from Algorithm \ref{algo:omkl:rf} and the loss of the best single kernel-based online learning algorithm. Specifically the following lemma holds:
\begin{lemma}
\label{lemma10}
Under (as1) and (as2), with $\{\hat{f}_{p,t}\}$ generated from Raker, it holds that 
	\begin{equation}
	\label{eq:lemm10}
		\sum_{t=1}^T \sum_{p=1}^P \bar{w}_{p,t} {\cal L}_{t}(\hat{f}_{p,t}(\bx_t))- \sum_{t=1}^T {\cal L}_{t}(\hat{f}_{p,t}(\bx_t))\leq\eta T+\frac{\ln P}{\eta}
	\end{equation}
	where $\eta$ is the learning rate in \eqref{eq.mkl-weight}, and $P$ is the number of kernels in the dictionary. 
\end{lemma}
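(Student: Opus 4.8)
The plan is to recognize the claimed inequality as the standard regret bound for the multiplicative-weights (Hedge / exponentiated-gradient) update, treating the $P$ per-kernel approximants $\{\hat{f}_{p,t}\}$ as experts whose instantaneous losses are $\ell_{p,t} := {\cal L}_t(\hat{f}_{p,t}(\bx_t)) \in [-1,1]$ (bounded by (as2)). The right-hand side involves the fixed index $p$ only through $\sum_t \ell_{p,t}$, so the bound is in fact established uniformly over every competitor expert $p$. The engine of the argument is the potential $W_t := \sum_{p=1}^P w_{p,t}$, with the uniform initialization $w_{p,1}=1$ so that $W_1 = P$, which is what produces the $\ln P$ term.

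First I would track how $W_t$ evolves. Substituting the closed-form update $w_{p,t+1} = w_{p,t}\exp(-\eta\ell_{p,t})$ from \eqref{eq.mkl-weight} and factoring out $W_t$ through the normalized weights $\bar{w}_{p,t} = w_{p,t}/W_t$ gives $W_{t+1} = W_t \sum_{p} \bar{w}_{p,t}\exp(-\eta\ell_{p,t})$. Because $\eta \in (0,1)$ and $|\ell_{p,t}| \le 1$, the exponent satisfies $|\eta\ell_{p,t}| < 1$, so I can apply the inequality $e^{z} \le 1 + z + z^2$ (valid for all $z \le 1$) with $z = -\eta\ell_{p,t}$, and then bound $\ell_{p,t}^2 \le 1$. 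Using $\sum_p \bar{w}_{p,t} = 1$ this yields $W_{t+1} \le W_t(1 - \eta\sum_p\bar{w}_{p,t}\ell_{p,t} + \eta^2)$, and one further application of $1+x \le e^{x}$ converts this into the multiplicative recursion $W_{t+1} \le W_t \exp(-\eta\sum_p \bar{w}_{p,t}\ell_{p,t} + \eta^2)$.

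Telescoping this recursion from $t=1$ to $T$ gives the global upper bound $W_{T+1} \le P\exp(-\eta\sum_{t}\sum_p\bar{w}_{p,t}\ell_{p,t} + \eta^2 T)$. For a matching lower bound I would simply discard all but one term, $W_{T+1} \ge w_{p,T+1} = \exp(-\eta\sum_t \ell_{p,t})$, for the fixed competitor $p$. Chaining the two bounds, taking logarithms, rearranging, and dividing by $\eta > 0$ produces exactly $\sum_t\sum_{p'}\bar{w}_{p',t}\ell_{p',t} - \sum_t \ell_{p,t} \le \ln P/\eta + \eta T$, which is the claim.

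The calculation is essentially routine; the only genuine care point is that the losses here live in $[-1,1]$ rather than $[0,1]$, so the one-sided exponential estimates usually quoted for nonnegative losses do not apply directly. I would therefore lean on the symmetric bound $e^z \le 1+z+z^2$ on $(-\infty,1]$, which comfortably covers $z = -\eta\ell_{p,t}$, and note that the $\eta T$ term arises precisely from the $z^2 \le \eta^2$ slack accumulated over the $T$ rounds. A secondary bookkeeping point is to keep the direction of every inequality consistent through the telescoping and the final log/divide step, since the potential $W_{T+1}$ is upper-bounded globally but lower-bounded by a single surviving weight.
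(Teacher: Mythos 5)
Your proposal is correct and follows essentially the same route as the paper's proof: the potential $W_t=\sum_{p}w_{p,t}$, the bound $e^{z}\le 1+z+z^{2}$ applied to the multiplicative update, the conversion $1+x\le e^{x}$, telescoping, the single-weight lower bound $W_{T+1}\ge w_{p,T+1}$, and the final log-and-rearrange step. The only cosmetic differences are your initialization convention ($w_{p,1}=1$, so the $\ln P$ enters through $W_1=P$, versus the paper's $w_{p,1}=1/P$, where it enters through the lower bound) and your earlier use of $\ell_{p,t}^{2}\le 1$; both yield the identical bound, and your explicit care with losses in $[-1,1]$ is if anything slightly more rigorous than the paper's statement of the exponential inequality.
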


\begin{proof}
Letting $W_{t}:=\sum_{p=1}^P w_{p,t}$, the weight recursion in \eqref{eq.mkl-weight} implies that
\begin{eqnarray}
\label{eq:sreg:W1}
	W_{t+1}&=&\sum_{p=1}^P w_{p,t+1}=\sum_{p=1}^P w_{p,t} \exp\left(-\eta{\cal L}_t\left(\hat{f}_{p,t}(\bx_t)\right)\right)\nonumber\\
	&\leq&\sum_{p=1}^P w_{p,t}\left(1-\eta{\cal L}_t\left(\hat{f}_{p,t}(\bx_t)\right)+\eta^2{\cal L}_t\left(\hat{f}_{p,t}(\bx_t)\right)^2\right)
\end{eqnarray}
where the last inequality holds because $\exp(-\eta x)\leq 1-\eta x+\eta^2 x^2$, for $|\eta|\leq 1$.
Furthermore, substituting $\bar{w}_{p,t}:=w_{p,t}/\sum_{p=1}^P w_{p,t}=w_{p,t}/W_t$ into \eqref{eq:sreg:W1}, it follows that
\begin{align}
	\label{eq:sreg:W2-0}
	W_{t+1}&\leq \sum_{p=1}^P W_t\bar{w}_{p,t}\left(1-\eta{\cal L}_t\left(\hat{f}_{p,t}(\bx_t)\right)+\eta^2{\cal L}_t\left(\hat{f}_{p,t}(\bx_t)\right)^2\right)\nonumber\\
	&= W_t\Bigg(1-\eta\sum_{p=1}^P\bar{w}_{p,t} {\cal L}_t\left(\hat{f}_{p,t}(\bx_t)\right)+\eta^2\sum_{p=1}^P\bar{w}_{p,t} {\cal L}_t\left(\hat{f}_{p,t}(\bx_t)\right)^2\Bigg).
\end{align}
Using $1+x\leq e^x,\,\forall x$, \eqref{eq:sreg:W2-0} leads to
\begin{align}
	\label{eq:sreg:W2}
	W_{t+1}&\leq W_t \exp \Bigg(-\eta \sum_{p=1}^P\bar{w}_{p,t} {\cal L}_t\left(\hat{f}_{p,t}(\bx_t)\right)+\eta^2 \sum_{p=1}^P\bar{w}_{p,t} {\cal L}_t\left(\hat{f}_{p,t}(\bx_t)\right)^2\Bigg).
\end{align}
Telescoping \eqref{eq:sreg:W2} from $t=1$ to $T$, we have ($W_1=1$)
\begin{align}\label{eq:sreg:W3}
	W_{T+1}&\leq \exp \Bigg(-\eta \sum_{t=1}^T\sum_{p=1}^P\bar{w}_{p,t} {\cal L}_t\left(\hat{f}_{p,t}(\bx_t)\right)+\eta^2  \sum_{t=1}^T\sum_{p=1}^P\bar{w}_{p,t} {\cal L}_t\left(\hat{f}_{p,t}(\bx_t)\right)^2\Bigg).
\end{align}

On the other hand, for any $p$, the following holds true
\begin{eqnarray}
\label{eq:sreg:W4}
	W_{T+1}\geq w_{p,T+1}&=&w_{p,1}\prod_{t=1}^T \exp(-\eta{\cal L}_t\left(\hat{f}_{p,t}(\bx_t)\right))\nonumber\\
	&= & w_{p,1}\exp\Bigg(-\eta\sum_{t=1}^T{\cal L}_t\left(\hat{f}_{p,t}(\bx_t)\right)\Bigg).
\end{eqnarray}
Combining \eqref{eq:sreg:W3} with \eqref{eq:sreg:W4}, we arrive at
\begin{align}
\label{eq:sreg:6} 
&\exp \Bigg(\!-\!\eta \sum_{t=1}^T\sum_{p=1}^P\bar{w}_{p,t} {\cal L}_t\left(\hat{f}_{p,t}(\bx_t)\right)+\eta^2  \sum_{t=1}^T\sum_{p=1}^P\bar{w}_{p,t} {\cal L}_t\left(\hat{f}_{p,t}(\bx_t)\right)^2\!\Bigg)\nonumber\\
	\geq\, & w_{p,1} \exp\Bigg(\!-\!\eta\sum_{t=1}^T{\cal L}_t\left(\hat{f}_{p,t}(\bx_t)\right)\!\Bigg).
\end{align}
Taking the logarithm on both sides of \eqref{eq:sreg:6}, we find that (cf. $w_{p,1}=1/P$)
\begin{eqnarray}
\label{eq:sreg:7}
-\eta \sum_{t=1}^T\sum_{p=1}^P\bar{w}_{p,t} {\cal L}_t\!\left(\hat{f}_{p,t}(\bx_t)\right)\!+\eta^2  \sum_{t=1}^T\sum_{p=1}^P\bar{w}_{p,t} {\cal L}_t\!\left(\hat{f}_{p,t}(\bx_t)\right)^2
	\!\geq\!-\eta\sum_{t=1}^T{\cal L}_t\!\left(\hat{f}_{p,t}(\bx_t)\right)\!-\ln P
\end{eqnarray}
which leads to 
\begin{equation}\label{eq:sreg:8}
	\sum_{t=1}^T\sum_{p=1}^P\bar{w}_{p,t} {\cal L}_t\left(\hat{f}_{p,t}(\bx_t)\right)\leq \sum_{t=1}^T{\cal L}_t\left(\hat{f}_{p,t}(\bx_t)\right)+\eta  \sum_{t=1}^T\sum_{p=1}^P\bar{w}_{p,t} {\cal L}_t\left(\hat{f}_{p,t}(\bx_t)\right)^2+\frac{\ln P}{\eta}
	\end{equation}
and the proof is complete since ${\cal L}_t\left(\hat{f}_{p,t}(\bx_t)\right)^2\leq 1$ and $\sum_{p=1}^P\bar{w}_{p,t}=1$.
\end{proof} 

Moreover, since ${\cal L}_t(\cdot)$ is convex under (as1), Jensen's inequality implies that
\begin{align}
\label{eq:sreg:9}
{\cal L}_t\bigg(\sum_{p=1}^P \bar{w}_{p,t} \hat{f}_{p,t}(\bx_t)\bigg)\leq \sum_{p=1}^P \bar{w}_{p,t} {\cal L}_t\left(\hat{f}_{p,t}(\bx_t)\right).
\end{align}
Plugging \eqref{eq:sreg:9} into \eqref{eq:lemm10} in Lemma \ref{lemma10}, we arrive at 
	\begin{align}
		\sum_{t=1}^T{\cal L}_t\bigg(\sum_{p=1}^P \bar{w}_{p,t} \hat{f}_{p,t}(\bx_t)\bigg)\leq &\sum_{t=1}^T {\cal L}_{t}\left(\hat{f}_{p,t}(\bx_t)\right)+\eta T+\frac{\ln P}{\eta}\nonumber\\
	\stackrel{(a)}{\leq}  &\sum_{t=1}^T{\cal L}_t\left(\hat{f}_p^*(\bbx_t)\right)+\frac{\ln P}{\eta}+\frac{\|\bbtheta_p^*\|^2}{2\eta}+\frac{\eta L^2T}{2}+\eta T
	\end{align}
where (a) follows because $\bbtheta_p^*$ is the optimal solution for any given kernel $\kappa_p$. This proves the claim in Lemma \ref{lemma4}.

\section{Proof of Theorem \ref{theorem0}}\label{app.pf.theorem0}
To derive the performance bound relative to the best function estimator $f^*(\bbx_t)$ in the RKHS, the key step is to bound the error of approximation. 
For a given shift-invariant $\kappa_p$, the maximum point-wise error of the RF kernel approximant is uniformly bounded with probability at least
$
	1-2^8\big(\frac{\sigma_p}{\epsilon}\big)^2 \exp \big(\frac{-D\epsilon^2}{4d+8}\big), 
$ by \citep{rahimi2007}
\begin{align}
\label{ieq:1}
	\sup_{\bbx_i,\bbx_j\in{\cal X}} \left|\bbz_p^\top(\bbx_i)\bbz_p(\bbx_j)-\kappa_p(\bbx_i,\bbx_j)\right|<\epsilon 
\end{align}
where $\epsilon>0$ is a given constant, $D$ the number of features, while $d$ represents the dimension of $\bbx$, and $\sigma_p^2:=\mathbb{E}_p[\|\bbv\|^2]$ is the second-order moments of the RF vector norm.  
Henceforth, for the optimal function estimator \eqref{eq.slot-opt} in ${\cal H}_p$ denoted by $f_p^*(\bbx):=\sum_{t=1}^T\alpha_{p,t}^* \kappa_p(\bbx,\bbx_t)$, and its RF-based approximant $\check{f}^*_p:=\sum_{t=1}^T\alpha_{p,t}^* \bbz_p^\top(\bbx)\bbz_p(\bbx_t)\in{\cal F}_p$, we have
\begin{align}
\label{eq:sreg:5.a}
	\left|\sum_{t=1}^T{\cal L}_t\left(\check{f}^*_p(\bbx_t)\right)-\sum_{t=1}^T{\cal L}_t\left(f_p^*(\bbx_t)\right)\right|\stackrel{(a)}{\leq}\, &\sum_{t=1}^T\left|{\cal L}_t\left(\check{f}^*_p(\bbx_t)\right)-{\cal L}_t(f_p^*(\bbx_t))\right|\nonumber\\
\stackrel{(b)}{\leq}\, &\sum_{t=1}^T L\left|\sum_{t'=1}^T\alpha_{p,t'}^*\bbz_p^\top(\bbx_{t'})\bbz_p(\bbx_t)-\sum_{t'=1}^T\alpha_{p,t'}^*\kappa_p(\bbx_{t'},\bbx_t)\right|\nonumber\\
\stackrel{(c)}{\leq}\, &\sum_{t=1}^T L\sum_{t'=1}^T|\alpha_{p,t'}^*|\left|\bbz_p^\top(\bbx_{t'})\bbz_p(\bbx_t)-\kappa_p(\bbx_{t'},\bbx_t)\right|
\end{align}
where (a) follows from the triangle inequality; (b) uses the Lipschitz continuity of the loss, and (c) is due to the Cauchy-Schwarz inequality.  
Combining with \eqref{ieq:1}, yields
\begin{eqnarray}
\label{eq:sreg:5}
\left|\sum_{t=1}^T{\cal L}_t(\check{f}^*_p(\bbx_t))-\sum_{t=1}^T{\cal L}_t(f_p^*(\bbx_t))\right|\leq \sum_{t=1}^T L\epsilon\sum_{t'=1}^T |\alpha_{p,t'}^*|\leq \epsilon L T C,~{\rm w.h.p.}
\end{eqnarray}
where the equality follows from $C:=\max_p \sum_{t=1}^T |\alpha_{p,t}^*|$.
Under the kernel bounds in (as3), the uniform convergence in \eqref{ieq:1} implies that $\sup_{\bbx_t,\bbx_{t'}\in{\cal X}} \bbz_p^\top(\bbx_t)\bbz_p(\bbx_{t'})\leq 1+\epsilon$, w.h.p., which in turn leads to
\begin{align}
\label{eq:sreg:4}
	\left\|\bbtheta_p^*\right\|^2:=\left\|\sum_{t=1}^T\alpha_{p,t}^*\bbz_p(\bbx_t)\right\|^2=\left|\sum_{t=1}^T\sum_{t'=1}^T\alpha_{p,t}^*\alpha_{p,t'}^*\bbz_p^{\top}(\bbx_t)\bbz_p(\bbx_{t'})\right|\leq (1+\epsilon)C^2
\end{align}
where we again used the definition of $C$. 

Lemma \ref{lemma4} together with \eqref{eq:sreg:5} and \eqref{eq:sreg:4} leads to the regret of the proposed Raker algorithm relative to the best static function in ${\cal H}_p$, that is given by 
	\begin{align}
		&\sum_{t=1}^T{\cal L}_t\bigg(\sum_{p=1}^P w_{p,t} \hat{f}_{p,t}(\bbx_t)\bigg)-\sum_{t=1}^T{\cal L}_t(f_p^*(\bbx_t))\nonumber\\
		=&\sum_{t=1}^T{\cal L}_t\bigg(\sum_{p=1}^P w_{p,t} \hat{f}_{p,t}(\bbx_t)\bigg)-\sum_{t=1}^T{\cal L}_t\left(\check{f}^*_p(\bbx_t)\right)+\sum_{t=1}^T{\cal L}_t\left(\check{f}^*_p(\bbx_t)\right)-\sum_{t=1}^T{\cal L}_t(f_p^*(\bbx_t))\nonumber\\
		\leq\,  &\frac{\ln P}{\eta}+\frac{\eta L^2T}{2}+\eta T+ \frac{(1+\epsilon)C^2}{2\eta}+\epsilon L T C,~{\rm w.h.p.}
	\end{align}
which completes the proof of Theorem \ref{theorem0}.

\section{Proof of Lemma \ref{lemma7}}\label{app.pf.lemm7}
Using Theorem \ref{theorem0} with $\eta=\epsilon={\cal O}(1/\sqrt{T})$, it holds w.h.p. that
\begin{align}\label{eq.pf.sreg}
\sum_{t=1}^T{\cal L}_t\bigg(\sum_{p=1}^P w_{p,t} \hat{f}_{p,t}(\bbx_t)\bigg)-\sum_{t=1}^T{\cal L}_t(f_{p^*}^*(\bbx_t))\!\leq\! \left(\ln P+\frac{C^2}{2}+\frac{L^2}{2}+LC\right)\!\sqrt{T}\!:=c_0\sqrt{T}
\end{align}
where $p^*:=\arg\min_{p\in{\cal P}}\sum_{t=1}^T{\cal L}_t\left(\hat{f}_p^*(\bbx_t)\right)$. 
At the end of interval $I$, we then deduce that the static regret of the Raker learner ${\cal A}_I$ is (cf. \eqref{eq.inter-reg})
\begin{align}\label{eq.pf.sreg2}
{\rm Reg}^{\rm s}_{{\cal A}_I}(|I|)=\sum_{t\in I}{\cal L}_t\left(\hat{f}_t^{(I)}(\bbx_t)\right)-\sum_{t\in I}{\cal L}_t(f_I^*(\bbx_t))\!\leq c_0\sqrt{|I|},~{\rm w.h.p.}
\end{align}
where $\hat{f}_t^{(I)}(\bbx_t)$ is defined in \eqref{eq.comb-up}, and $f_I^*\in\arg\min_{f\in\bigcup_{p\in{\cal P}}{\cal H}_p} \sum_{t\in I} {\cal L}_t(f(\bbx_t))$.
To this end, we sketch the main steps leading to Lemma \ref{lemma7} as follows. 

For every interval $I$, the static regret of the AdaRaker can be decomposed as
\begin{align}\label{eq.pf.sreg3}
{\rm Reg}^{\rm s}_{\rm AdaRaker}(|I|)&=\sum_{t\in I}{\cal L}_t\left(\hat{f}_t(\bbx_t)\right)\!-\!\sum_{t\in I}{\cal L}_t\left(\hat{f}_t^{(I)}(\bbx_t)\right)+\sum_{t\in I}{\cal L}_t\left(\hat{f}_t^{(I)}(\bbx_t)\right)\!-\!\sum_{t\in I}{\cal L}_t(f_I^*(\bbx_t))\nonumber\\
:\!&={\cal R}_1+{\cal R}_2
\end{align}
where the first two sums in \eqref{eq.pf.sreg3} represented by ${\cal R}_1$ capture the regret 
of the Ada-Raker learner ${\cal A}$ relative to the Raker learner ${\cal A}_I$; while the last two sums in \eqref{eq.pf.sreg3} forming ${\cal R}_2$ denote the static regret of ${\cal A}_I$ on this interval. 
Notice that ${\cal R}_2$ directly follows from \eqref{eq.pf.sreg2}, while ${\cal R}_1$ can be bounded following the same steps in Lemma \ref{lemma10}.  
Different from the kernel selections however, the crux is that the number of Raker learners (experts) $|{\cal I}(t)|$ is time-varying. 

A tight bound can be derived via the \emph{Sleeping Experts} reformulation of \citep{luo2015,daniely2015}, where an expert that has never appeared is thought of as being \emph{asleep} for all previous rounds. 
For a looser bound, we assume the experts (instances $\{{\cal A}_I\}$) ever appeared until $t$ are all active; that is, the total number of experts is upper bounded by $t\log t$, since at most $\log t$ experts are run during time $t$. Using \eqref{eq:sreg:W1}-\eqref{eq:sreg:8}, we have that
\begin{align}\label{eq.pf.sreg4}
{\cal R}_1\leq  \eta^{(I)} |I|+\frac{\ln (t\log t)}{\eta^{(I)}}=\sqrt{|I|}\left(1+\ln t+\ln(\log t)\right)\leq \sqrt{|I|}\left(1+2\ln t\right)
\end{align}
where $\eta^{(I)}:={1}/{\sqrt{|I|}}$, and $\ln(\log t)\!\leq\!\ln(t)$. With \eqref{eq.pf.sreg2}, for any interval $I\in {\cal I}$, we have 
\begin{align}\label{eq.pf.sreg5}
	{\rm Reg}^{\rm s}_{\rm AdaRaker}(|I|)=\sqrt{|I|}\left(1+c_0+2\ln t\right)\leq \sqrt{|I|}\left(1+c_0+2\ln T\right).
\end{align}
Since the static regret bound \eqref{eq.pf.sreg3} holds only at the end of such interval,
the bound \eqref{eq.pf.sreg5} only holds for those intervals (collected in ${\cal I}$) (re)initializing Raker instance ${\cal A}_I$.

The next step is to show that \eqref{eq.pf.sreg5} holds for any interval $I\subseteq {\cal T}$, possibly $I\notin {\cal I}$. 
This is possible whenever the interval set ${\cal I}$ is properly designed, e.g., the interval partition given in Section 4.1. 
For any interval $I$, define the set of subintervals covered by $I$ as ${\cal I}\|I:=\{I'\subseteq I, I'\in {\cal I}\}$. As argued in \cite[Lemma 5]{daniely2015}, interval $I$ can be partitioned into two sequences of \emph{non-overlapping} but \emph{consecutive} intervals, given by $\{I_{-m},\ldots,I_0\}\subseteq{\cal I}\|I$ and $\{I_{1},\ldots,I_n\}\subseteq{\cal I}\|I$, the lengths of which satisfy $|I_{k+1}|/|I_{k}|\leq 1/2,\,\forall k\in[1,n-1]$ and $|I_{k}|/|I_{k+1}|\leq 1/2,\,\forall k\in[-m,-1]$. Therefore, we have (using $\sum_{k=1}^{\infty}\sqrt{2^{-k}T_0}\leq 4\sqrt{T_0}$)
\begin{align}\label{eq.pf.sreg6}
	{\rm Reg}^{\rm s}_{\rm AdaRaker}(|I|)=\sum_{k=1}^{n-1}{\rm Reg}^{\rm s}_{\rm Ada}(|I_k|)+\sum_{k=-m}^{-1}{\rm Reg}^{\rm s}_{\rm Ada}(|I_k|)\leq C_0\sqrt{|I|}+C_1\ln T\sqrt{|I|}
\end{align}
where the inequality follows from \eqref{eq.pf.sreg5} with $|I|$ replaced by $|I_k|$ $(\leq |I|)$, and $C_0$, $C_1$ are constants depending on $c_0$ defined in \eqref{eq.pf.sreg}. This completes the proof of Lemma \ref{lemma7}.

\section{Proof of Theorem \ref{thm.dyn-reg}}\label{app.pf.thm.dyn-reg}
To start, the dynamic regret in \eqref{eq.dyn-reg} can be decomposed as
\begin{align}\label{eq.pf.dyn-reg}
    {\rm Reg}^{\rm d}_{\cal A}(T):=\sum_{t=1}^T {\cal L}_t(\hat{f}_t(\bbx_t))-\sum_{t=1}^T{\cal L}_t(f^*(\bbx_t))+ \sum_{t=1}^T{\cal L}_t(f^*(\bbx_t))-\sum_{t=1}^T{\cal L}_t(f_t^*(\bbx_t))
\end{align}
where $f^*(\cdot)$ is the best fixed function in \eqref{eq.slot-opt}, and $f_t^*(\cdot)$ is the best dynamic function in \eqref{eq.realtime-prob}, both of which belong to the union of spaces $\bigcup_{p\in{\cal P}}{\cal H}_p$. 
In \eqref{eq.pf.dyn-reg}, the first difference of sums is the static regret of AdaRaker, while the second difference of sums is the relative loss between the best fixed function and the best dynamic solution in the common space.

Intuitively, if the time horizon $T$ is large, then the average static regret will become small, but the gap between the two benchmarks is large. 
With the insights gained from \citep{besbes2015,luo2017}, $T$ essentially trades off the values of two terms. Thus, splitting ${\cal T}$ into sub-horizons $\{{\cal T}_s\}, s=1,\ldots,\lfloor T/\Delta T\rfloor$, each having length $\Delta T$, the dynamic regret of AdaRaker can be bounded by
	\begin{align}\label{pf.dyn-regdep}
		{\rm Reg}^{\rm d}_{\rm AdaRaker}(T)&=\!\!\sum_{s=1}^{\lfloor T/\Delta T\rfloor}\!\!\sum_{t\in {\cal T}_s}\left({\cal L}_t(\hat{f}_t(\bbx_t))\!-\!{\cal L}_t(f_{{\cal T}_s}^*(\bbx_t))\right)\!+\!\!\sum_{s=1}^{\lfloor T/\Delta T\rfloor}\!\! \sum_{t\in {\cal T}_s}\left({\cal L}_t(f_{{\cal T}_s}^*(\bbx_t))\!-\!{\cal L}_t(f_t^*(\bbx_t))\right)\nonumber\\
		:\!&=\sum_{s=1}^{\lfloor T/\Delta T\rfloor}{\cal R}_1+\sum_{s=1}^{\lfloor T/\Delta T\rfloor}{\cal R}_2
		\end{align}
where the first sum over ${\cal T}_s$ we define as ${\cal R}_1$ can be bounded under AdaRaker from Lemma \ref{lemma7}, while the second sum over  ${\cal T}_s$ that we define as ${\cal R}_2$ depends on the variability of the environments ${\cal V}(\{{\cal L}_t\})$, can be bounded by \cite[Prop. 2]{besbes2015}
\begin{align}\label{pf.dyn-regdep2}
	{\cal R}_2\leq 2\Delta T{\cal V}(\{{\cal L}_t\}_{t\in {\cal T}_s}).
\end{align}
Together with Lemma \ref{lemma7}, it follows that
	\begin{align}\label{pf.dyn-regbound}
		{\rm Reg}^{\rm d}_{\rm AdaRaker}(T)& \leq  \sum_{s=1}^{\lfloor T/\Delta T\rfloor}\left((C_0+C_1\ln T)\sqrt{\Delta T}+2\Delta T{\cal V}(\{{\cal L}_t\}_{t\in {\cal T}_s})\right)\nonumber\\
		&=(C_0+C_1\ln T)\frac{T}{\sqrt{|\Delta T|}}+2|\Delta T|{\cal V}(\{{\cal L}_t\}_{t=1}^T),\, {\rm w.h.p.} 
	\end{align}
	Since \eqref{eq.lemma7} in Lemma \ref{lemma7} holds for any interval $\Delta T\subseteq {\cal T}$, after selecting $\Delta T$ so that $|\Delta T|=\left(T/{\cal V}(\{{\cal L}_t\}_{t=1}^T)\right)^{\frac{2}{3}}$, we arrive at 
		\begin{align}\label{pf.dyn-regbound2}
		{\rm Reg}^{\rm d}_{\rm AdaRaker}(T)\leq  (C_0+C_1\ln T)T^{\frac{2}{3}}{\cal V}^{\frac{1}{3}}(\{{\cal L}_t\}_{t=1}^T)+2T^{\frac{2}{3}}{\cal V}^{\frac{1}{3}}(\{{\cal L}_t\}_{t=1}^T),\, {\rm w.h.p.}
	\end{align}
which completes the proof of Theorem \ref{thm.dyn-reg}.
   
\section{Proof of Theorem \ref{thm.reg-sa-d}}\label{app.pf.thm.reg-sa-d}
Suppose that the $m$-switching dynamic solution $\{\check{f}_t^*\}$ changes at slots $t_1=1,\ldots,t_m$, and define the $m$ sub-intervals that partition ${\cal T}:=\{1,\ldots,T\}$ as ${\cal T}_1:=[1,t_2-1]$, ${\cal T}_2:=[t_2,t_3-1], \ldots$, and ${\cal T}_m:=[t_m,T]$.  
To use the bound in Lemma \ref{lemma7}, we decompose the regret of AdaRaker relative to the $m$-switching dynamic solution $\{\check{f}_t^*\}$ by
\begin{align}\label{pf.stw-reg}
	{\rm Reg}^m_{\rm AdaRaker}(T)& \stackrel{(a)}{=}\sum_{s=1}^m \, \sum_{t\in{\cal T}_s} \left({\cal L}_t(\hat{f}_t(\bbx_t))-{\cal L}_t(\check{f}_{t_s}^*(\bbx_t))\right)\nonumber\\
	&\stackrel{(b)}{\leq} \sum_{s=1}^m \, \sum_{t\in{\cal T}_s} \left({\cal L}_t(\hat{f}_t(\bbx_t))-{\cal L}_t(f_{{\cal T}_s}^*(\bbx_t))\right)
\end{align}
where (a) holds because the definition of $\{\check{f}_t^*\}$ in \eqref{eq.m-switch} implies that $\check{f}_t^*=\check{f}_{t_s}^*,\,\forall t\in{\cal T}_s$, and (b) because the best fixed function is given by $f_{{\cal T}_s}^*\in\arg\min_{f\in {f\in\bigcup_{p\in{\cal P}}{\cal H}_p}} \,\sum_{t\in {\cal T}_s} {\cal L}_t(f(\bbx_t))$. 
Therefore, using the regret bound of Lemma \ref{lemma7} in \eqref{eq.lemma7}, we have
\begin{equation}
	{\rm Reg}^m_{\rm AdaRaker}(T)\leq \sum_{s=1}^m {\rm Reg}_{\cal A}^{\rm s} (|{\cal T}_s|)\leq (C_0+C_1\ln T)\sum_{s=1}^m\sqrt{|{\cal T}_s|}.
\end{equation}
Holder's inequality further implies that 
\begin{align}
	{\rm Reg}^m_{\rm AdaRaker}(T)&\leq (C_0+C_1\ln T)\left(\sum_{s=1}^m (1)^2\right)^{\frac{1}{2}}
	\left(\sum_{s=1}^m \left(\sqrt{|{\cal T}_s|}\right)^2\right)^{\frac{1}{2}}\nonumber\\
	&\leq (C_0+C_1\ln T)\sqrt{T m},~{\rm w.h.p.}
\end{align}
which completes the proof of Theorem \ref{thm.reg-sa-d}.

\vspace{3cm}

\bibliographystyle{IEEEtran}
\bibliography{myabrv,net,dmkl}
\end{document}